
\documentclass{article}


\usepackage[nonatbib, final]{neurips_2024}




\usepackage[utf8]{inputenc} 
\usepackage[T1]{fontenc}    
\usepackage{hyperref}       
\usepackage{url}            
\usepackage{booktabs}       
\usepackage{amsfonts}       
\usepackage{amsmath}
\usepackage{amssymb}
\usepackage{mathtools}
\usepackage{amsthm}
\usepackage{nicefrac}       
\usepackage{microtype}      
\usepackage{xcolor}         
\usepackage[capitalize,noabbrev]{cleveref}
\usepackage{local-macros}
\usepackage{enumitem}
\usepackage[normalem]{ulem}

\usepackage{luatex85}
\usepackage{graphicx}
\usepackage{graphbox}
\usepackage{subfig}
\usepackage{subcaption}
\usepackage[export]{adjustbox}

\usepackage{biblatex}
\addbibresource{bibliography.bib}

\usepackage{algorithm}
\usepackage{algorithmic}
\usepackage{amssymb}

\title{Nonparametric Instrumental Variable Regression through Stochastic Approximate Gradients}

%

\author{%
  Yuri R. Fonseca\thanks{Alphabetical order.} \\
    Decision, Risk and Operations \\
    Columbia University \\
    New York, NY \\
    \texttt{yfonseca23@gsb.columbia.edu}
  \And
  Caio F. L.~Peixoto\footnotemark[1]\\
    School of Applied Mathematics \\
    Getulio Vargas Foundation \\
    Rio de Janeiro, RJ, Brazil \\
    \texttt{caio.peixoto@fgv.br}
  \And
  Yuri F. Saporito\footnotemark[1]\\
    School of Applied Mathematics \\
    Getulio Vargas Foundation \\
    Rio de Janeiro, RJ \\
    \texttt{yuri.saporito@fgv.br} \\
}

\theoremstyle{plain}
\newtheorem{theorem}{Theorem}[section]
\newtheorem{proposition}[theorem]{Proposition}
\newtheorem{lemma}[theorem]{Lemma}
\newtheorem{corollary}[theorem]{Corollary}
\theoremstyle{definition}

\newtheorem{assumption}[theorem]{Assumption}
\theoremstyle{remark}
\newtheorem{remark}[theorem]{Remark}

\hfuzz=10pt

\newcommand{\caio}[1]{{\color{blue}#1}}

\begin{document}

\maketitle

\begin{abstract}
    Instrumental variables (IVs) provide a powerful strategy for identifying causal effects in the presence of unobservable confounders.
    Within the nonparametric setting (NPIV), recent methods have been based on nonlinear generalizations of Two-Stage Least Squares and on minimax formulations derived from moment conditions or duality. 
    In a novel direction, we show how to formulate a functional stochastic gradient descent algorithm to tackle NPIV regression by directly minimizing the populational risk. 
    We provide theoretical support in the form of bounds on the excess risk, and conduct numerical experiments showcasing our method's superior stability and competitive performance relative to current state-of-the-art alternatives.
    This algorithm enables flexible estimator choices, such as neural networks or kernel based methods, as well as non-quadratic loss functions,
    which may be suitable for structural equations
    beyond the setting of continuous outcomes and additive noise.
     Finally, we demonstrate this flexibility of our framework by presenting how it naturally addresses the important case of binary outcomes, which has received far less attention by recent developments in the NPIV literature.
\end{abstract}

\section{Introduction}
\label{sec: introduction}

\caio{
%
%
%
%
%
%
%
}

Causal inference from observational data presents unique challenges, primarily due to the potential for confounding variables that can affect both outcomes and variables of interest. The unconfoundedness assumption, crucial in this context, posits that all confounding variables are observed and properly accounted for, allowing for an unbiased estimation of causal effects. However, in many real-world scenarios, this assumption is difficult to satisfy. When this is the case, approaches that rely on \textit{instrumental variables} (IVs) --- quantities that are correlated with the variable of interest (relevance condition),
do not affect the outcome in any other way (exclusion condition) and are independent of the unobservable confounders --- offer a way to identify causal effects even when unobserved confounders exist. As a concrete example, suppose we want to estimate the impact of years of education on earnings. Most likely, there are unobservable factors, such as omitted ability, affecting both the decision to study and income. In this case, changes in compulsory schooling laws could be used as an instrument \cite{angrist1991does}.


While traditional parametric approaches to IV regression often require assumptions about the relationships between variables that may not hold in practice, \textit{nonparametric} IV (NPIV) models can adapt to the intrinsic structure of the data, allowing for a more nuanced understanding of causal relationships.
For this reason, there has been a recent boost of new algorithms applied to the NPIV estimation problem and its theoretical properties. The challenge is that NPIV estimation is an ill-posed inverse problem \cite{newey2003,florens2007,cavalier2011inverse}, and recent methods aim to incorporate developments from predictive models, e.g., deep learning and kernel methods, while also accounting for the particular structure of the inverse problem at hand.

In this work, we deviate from previous approaches to NPIV estimation which end up minimizing \emph{empirical} versions of the populational risk \cite{newey2003, deepiv2017, singh2019, dualiv2020}, or trying to satisfy large collection of empirical moment restrictions \cite{gmmhansen1982, lewis2018adversarial, deepgmm2019}.
In our formulation, we compute an analytical functional gradient for the actual \emph{populational} risk.
After formulating a consistent estimator for this  gradient, we apply stochastic gradient descent in a certain function space to recover the effect of $X$ on the outcome $Y$.

The rest of the paper is structured as follows. We conclude Section \ref{sec: introduction} with a thorough discussion of the previous works on NPIV and our contributions.
Next, we provide the basic setting for NPIV regression in \cref{sec: problem setup and notation}.
In \cref{sec: risk measure} we detail the associated risk minimization problem and analytically compute its stochastic gradient.
Based on these results, we introduce our method in \cref{sec: sagd-iv}, with accompanying practical considerations and theoretical support.
Numerical experiments are reported on \cref{sec: experiments}, where our algorithm is compared to state-of-the-art machine learning methods for NPIV.
Finally, in \cref{sec: binary} we study the case of binary outcomes, showing how our method naturally addresses this scenario under the current assumptions made in the literature.
Proofs of all our results are presented in \cref{sec: proofs}, while \cref{sec: comparison} contains formal comparisons with existing methods and \cref{sec: implementation details} provides additional implementation details.

\subsection{Previous work}


Many traditional approaches to IV estimation, like the Two-Stage Least Squares (2SLS) method, rely on linear models for treatment estimation and counterfactual prediction functions
(see \cite{angrist2009mostly, wooldridge2001} for a thorough survey of classical IV estimation).
These approaches, while efficient for estimating policy effects, depend on strong parametric assumptions.
Nonparametric extensions of 2SLS then attempt to introduce model flexibility by utilizing linear projections onto known basis or sieve functions (as in \cite{newey2003, newey2013}) or kernel-based estimators (as in \cite{hall2005nonparametric, darolles2011}).
However, these traditional methods face limitations
in large, high-dimensional datasets
as they are sensible to the particular choice of sieve functions and number of basis elements
(see the discussion in \cite{deepiv2017}). 

In order to propose a scalable method, \cite{deepiv2017} introduces DeepIV, a generalization of 2SLS which employs neural networks in each step of the two-stage procedure.
Although their algorithm is more suitable for high dimensional data, the authors do not provide theoretical guarantees. Deep learning estimators have also appeared within methods focused on the Generalized Method of Moments (GMM), which
leverage moment restrictions imposed by the IV to obtain an estimator. 
In \cite{deepgmm2019}, the authors propose DeepGMM, a reformulation of the optimal weighted GMM problem as a minimax problem. They rely on the identification assumption to demonstrate consistency results, provided their algorithm is able to nearly solve a smooth zero-sum game.


In another direction, some methods exploit developments in the RKHS\footnote{Reproducing Kernel Hilbert Space} literature and apply them to NPIV estimation.
The KIV algorithm in \cite{singh2019} transforms the problem into two-stage kernel ridge regression through a kernel mean embedding of the distribution of the endogenous covariate given the instrument.
In \cite{dualiv2020}, the authors take insight from two-stage problems in stochastic programming and propose DualIV, an algorithm which uses Fenchel duality to transform NPIV into a convex-concave saddle point problem, for which a closed-form RKHS algorithm is presented. Inspired by the gradient boosting literature, \cite{boostIV} propose an algorithm that iteratively fits base learners in a boosting fashion, but still needs a post-processing step where the basis functions are held fixed and the weights are optimized.

We point out that most of the cited works have some flexibility issues. While kernel based methods impose a higher computational cost as the number of data points grows and may exhibit poor performance in high dimensions \cite{svm2008, dualiv2020}, Deep Learning based algorithms can suffer from high variance and instability if the amount of available data is \emph{small}, as was seen in \cite{dualiv2020, boostIV}.

Furthermore, in many applications of interest, ranging from consumer behavior \cite{petrin2010control} to epidemiology \cite{lawlor2008mendelian}, the outcome is binary; for a review, see \cite{clarke2012instrumental}. In these scenarios, a straightforward application of the quadratic loss function would result in a misspecification of the problem, potentially leading to erroneous estimates. The above NPIV methods strongly leverage the additive structure of the problem formulation, and proper extensions would require significant effort, a detailed discussion is presented in \cref{sec: comparison}. 
Traditionally, a common approach for binary outcomes is using a \textit{semi-parametric} specification based on control function \cite{horowitz2011applied}. This allows for nonparametric estimation of the distribution of the unobservables together with identification of the causal function's \textit{parameters} \cite{ahn1996simple,rothe2009semiparametric,dong2015simple,chitla2022nonparametric}. To the best of our knowledge, the only work that addresses \textit{fully nonparametric} IV for binary outcomes is \cite{florens2021}, which proposes an estimation strategy based on Tikhonov regularization.

\subsection{Contribution}

We propose a novel algorithm for NPIV estimation, SAGD-IV, which works by minimizing the populational risk through stochastic gradient descent in a function space. 
Under mild assumptions, we provide finite sample bounds on the excess risk of our estimator.
Empirically, we demonstrate through numerical experiments that SAGD-IV achieves state-of-the-art performance and better stability.

We have the freedom to employ a variety of supervised learning algorithms to form the estimator of the stochastic gradient, most notably kernel methods and neural networks. This means our estimator could be tailored to specific scenarios where a particular method is more likely to perform well. 
Furthermore, our algorithm is naturally able to handle non-quadratic loss functions, which 
allows us to extend both our algorithm and theoretical guarantees to the binary outcomes case.

\section{Problem setup and notation}
\label{sec: problem setup and notation}

Following most of the recent literature, we start by presenting the problem setup under the additive noise assumption to demonstrate our methodological contribution. Then, in Section \ref{sec: binary}, we analyze the important case of binary outcomes, a prototypical example of when this assumption does not hold.

Let $ ( \Omega, \mathcal{A}, \prob ) $ be the underlying probability space, and let $ X $ be a random vector of covariates taking values in $ \Xspace \subseteq \R^{ d_{ X } } $.
We assume that the \emph{response variable} $ Y $ is generated according to
\begin{equation}
    \label{eq: structural equation}
    Y = \hstar ( X ) + \varepsilon
,\end{equation}
where $ \varepsilon \in L^2 ( \Omega, \mathcal{A}, \prob ) $ and satisfies $ \mean [ \varepsilon ] = 0 $.
We denote by $ \prob_{ X } $ the distribution of the r.v. $ X $ and assume that the \emph{structural function} $ \hstar $ belongs to $ L^2 ( \Xspace, \borel ( \Xspace ), \prob_{ X } ) $\footnotemark, which we simply denote by $ L^2 ( X ) $.
\footnotetext{We denote by $ \mathcal{B} ( \Xspace ) $ the Borel $ \sigma $-algebra in $ \Xspace $.}
This is a Hilbert space with norm and inner product given by
$ \norm*{ h }_{ L^2 ( X ) }^2 = \mean [ h ( X )^2 ] $ and $ \dotprod{ h, g }_{ L^2 ( X ) } = \mean [ h ( X ) g ( X ) ] $.
We assume that $ \mean [ \varepsilon \mid X ] \neq 0 $, that is, some covariates are endogenous.
Finally, we assume the existence of a random vector $ Z $, taking values in $ \Zspace \subseteq \R^{ d_{ Z } } $ and satisfying
\begin{enumerate}
    \item $ \mean [ \varepsilon \mid Z ] = 0 $, the exclusion restriction;
    \item $ X \notindep Z $, i.e., $ Z $ is relevant.
\end{enumerate}
This makes $ Z $ a valid instrumental variable.
We define $ \prob_{ Z } $ and $ L^2 ( Z ) $ analogously to $ \prob_{ X } $ and $ L^2 ( X ) $.
We further consider the mild assumption that $X$ and $Z$ have a joint density denoted by $ p_{ X, Z }$.
Our goal is to estimate $ \hstar $ based on i.i.d. samples from the joint distribution of $ X, Z $ and $ Y $.

As we have listed in the introduction, there are a few different approaches to estimate $\hstar$.
We will follow here the original one of \cite{newey2003}, in which we take the expected value of \cref{eq: structural equation} conditioned on $ Z $ to get
\begin{equation}
    \label{eq: conditioned structural equation}
    \mean [ Y \mid Z ] = \mean [ \hstar ( X ) \mid Z ]
.\end{equation}
This motivates us to define the conditional expectation operator $ \meanop : L^2 ( X ) \to L^2 ( Z ) $ given by
\begin{equation*}
    \meanop [ h ] ( z ) = \mean [ h ( X ) \mid Z = z ]
.\end{equation*}
This is a bounded linear operator which satisfies $ \norm*{ \meanop }_{ \op } \leq 1 $ (the operator norm) and whose adjoint $ \meanop^{ * } : L^2 ( Z ) \to L^2 ( X ) $, that is also a bounded linear operator, is given by $ \meanop^{ * } [ g ] ( x ) = \mean [ g ( Z ) \mid X = x ] $.
Defining $ r ( Z ) = \mean [ Y \mid Z ] $, we can then rewrite \cref{eq: conditioned structural equation} as
\begin{equation}
    \label{eq: operator structural equation}
    r = \meanop [ \hstar ]
.\end{equation}
This is a Fredholm integral equation of the first kind \cite{kress89} and, as such, poses an ill-posed linear inverse problem.

In this context, a common assumption \cite{darolles2011, masatoshi2023, singh2019} made about $ \meanop $ is compactness.
We also need it here, noting that compact operators with infinite dimensional range provide prototypical examples of ill-posed inverse problems \cite{florens2007}.
However, we wish to phrase this assumption in a different, albeit equivalent \cite{florens2007}, form:
\begin{assumption}
    \label{assumption Phi}
    Let
    \begin{equation}
    \label{eq: Phi definition}
        \Phi ( x, z ) = \frac{ p_{ X, Z } ( x, z ) }{ p_{ X } ( x ) p_{ Z } ( z ) }
    \end{equation}
    denote the ratio of joint over product of marginals densities for the $ X $ and $ Z $ variables, with the convention that $ 0 / 0 = 0 $.
    We assume that this kernel has finite $ L^2 ( \prob_{ X } \otimes \prob_{ Z } ) $ norm, that is,
    \begin{equation*}
        \norm*{ \Phi }_{ L^2 ( \prob_{ X } \otimes \prob_{ Z } ) }^2
        = \int_{ \Xspace \times \Zspace } \Phi ( x, z )^2 p_{ X } ( x ) p_{ Z } ( z ) \drm x \ddrm z < \infty
    .\end{equation*}
    This is equivalent to assuming that the conditional expectation operator $ \meanop : L^2 ( X ) \to L^2 ( Z ) $ is Hilbert-Schmidt and, hence, compact.
\end{assumption}

\section{The risk and its gradient}
\label{sec: risk measure}

Motivated by \cref{eq: operator structural equation}, we introduce a pointwise loss function $ \ell : \R \times \R \to \R $ and define the associated \emph{populational risk measure}\footnote{We note that a more fitting name for this object would be \emph{projected risk measure}, since we are projecting $h$ onto the instrument space before applying the loss function. Concerning this matter, we chose to follow the terminology used in the current NPIV literature, which refers to this projected risk simply as ``risk''.} $ \risk : L^{ 2 } ( X ) \to \R $ as
\begin{equation}
    \label{eq: risk definition}
    \risk ( h ) = \mean [ \ell ( r ( Z ), \meanop [ h ] ( Z ) ) ]
.\end{equation}
The example the reader should keep in mind is the squared loss function $ \ell ( y, y' ) = \frac{ 1 }{ 2 } ( y - y' )^2 $, although, as we will see, other examples could be used depending on the particular regression setting.
Our goal is to solve the NPIV regression problem by solving
\begin{equation*}
    \inf_{ h \in \searchset } \risk ( h )
,\end{equation*}
where $ \searchset $ is a closed, convex, bounded subset of $ L^2 ( X ) $ such that $ \hstar \in \searchset $.
We also require $ 0 \in \searchset $.
For future reference, we state these conditions:
\begin{assumption}[Regularity of $ \searchset $]
    \label{assumption on H}
    The set $ \searchset $ is a closed, convex, bounded subset of $ L^2 ( X ) $, which contains the origin and satisfies $\hstar \in \searchset$.
\end{assumption}
The only part of this assumption which concerns the data generating process is $\hstar \in \searchset$, which essentially means that the set $ \searchset $ is large enough.\footnote{This part could be generalized to $ ( \hstar + \ker \meanop ) \cap \searchset \neq \emptyset $, so that there exists $ h \in \searchset $ such that $ \risk ( h ) = \risk ( \hstar ) $, without prejudice to any of the theoretical results.}

For $ \searchset $ satisfying \cref{assumption on H}, we let $ D \defeq \diam \searchset < \infty $, so that $ \norm*{ h } < D $ for every $ h \in \searchset $. 
One possible choice for the set $ \searchset $ is the $ L^{ \infty } ( X ) $ ball contained in $ L^2 ( X ) $, that is
\begin{equation}
    \label{eq: H is L infinity ball}
    \searchset = \left\{ h \in L^2 ( X ) : \norm*{ h }_{ \infty } \leq A \right\}
,\end{equation}
where $ A > 0 $ is a constant.
This set is obviously convex and bounded in the $ L^2 ( X ) $ norm.
It can be shown that it is also closed, but not necessarily compact, a restriction on the search set imposed in \cite{newey2003}.
This can be seen by taking a $ \norm*{ \cdot }_{ \infty } $--bounded orthonormal basis for $ L^2 ( X ) $, if one exists.
We denote by $ \proj_{ \searchset } $ the orthogonal projection onto $ \searchset $.
In case $ \searchset $ is given by \cref{eq: H is L infinity ball}, we have the explicit formula\footnote{Here we use the notation $h^+ = \max \{h, 0 \}, h^- = (-h)^+$ and $a \wedge b = \min \{ a, b \}.$} $\proj_{ \searchset } [ h ] = ( h^{ + } \wedge A ) - ( h^{ - } \wedge A )$.

We now state all the assumptions needed on the pointwise loss $ \ell $.
We denote by $ \partial_{ 2 } $ a partial derivative with respect to the second argument.
\begin{assumption}[Regularity of $ \ell $]
    \label{assumption loss}
    \begin{enumerate}
        \item[]
        \item The function $ \ell : \R \times \R \to \R $ is convex and $ C^2 $ with respect to its second argument;
        \item The function $ \ell $ has Lipschitz first derivative with respect to the second argument, i.e., there exists $ L \geq 0 $ such that, for all $ y, y', u, u' \in \R $ we have
            \begin{equation*}
                \abs{ \partial_{ 2 } \ell ( y, y' ) - \partial_{ 2 } \ell ( u, u' ) }
                \leq L ( \abs{ y - u } + \abs{ y' - u' } ).\label{en: lipschitz gradients}
                \end{equation*}
    \end{enumerate}
\end{assumption}
Some useful facts which follow immediately from these assumptions are presented in Appendix \ref{sec: proofs}.

\subsection{Gradient computation}
\label{sec: gradient computation}

Here we divert from the previous literature and proceed to compute a functional stochastic gradient for the risk $\risk$.
A similar idea has been considered by  \cite{fonseca2022statistical} in the context of statistical inverse problems,
however, their setup assumes that the operator posing the inverse problem is known, which greatly simplifies the analysis and cannot be directly applied to the NPIV problem.
We start by providing an analytical formula for $ \nabla \risk ( h ) $:
\begin{proposition}
    \label{prop: gradient expression}
    The risk $ \risk $ is Fréchet differentiable and it's gradient satisfies
    \begin{equation}
        \label{eq: gradient formula}
        \nabla \risk ( h ) = \meanop^{ * } [ \partial_{ 2 } \ell ( r ( \cdot ), \meanop [ h ] ( \cdot ) ) ] \in L^2(X)
    ,\end{equation}
    where $ \meanop^{ * } : L^2 ( Z ) \to L^2 ( X ) $ is the adjoint of the operator $ \meanop $.
\end{proposition}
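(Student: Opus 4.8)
The plan is to exploit the compositional structure of the risk. Writing $\risk = F \circ \meanop$, where $\meanop : L^2(X) \to L^2(Z)$ is the bounded linear operator from the setup and $F : L^2(Z) \to \R$ is the outer functional $F(g) = \mean[\ell(r(Z), g(Z))]$, I would reduce the problem to two steps: (i) establishing Fréchet differentiability of $F$ with an explicit gradient, and (ii) invoking the chain rule together with the definition of the adjoint $\meanop^{*}$. Since $\meanop$ is linear and bounded, it is its own Fréchet derivative everywhere, so the chain rule will give $D\risk(h)[w] = \dotprod{\nabla F(\meanop[h]), \meanop[w]}_{L^2(Z)} = \dotprod{\meanop^{*}[\nabla F(\meanop[h])], w}_{L^2(X)}$, which already has the desired shape once $\nabla F$ is identified.

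Next I would guess the gradient of $F$ at $g$ to be $\nabla F(g) = \partial_2 \ell(r(\cdot), g(\cdot))$ and first check that this candidate actually lies in $L^2(Z)$, so that the linear functional $v \mapsto \dotprod{\partial_2 \ell(r(\cdot), g(\cdot)), v}_{L^2(Z)}$ is bounded. This is where \cref{assumption loss} enters: the Lipschitz bound on $\partial_2 \ell$ yields the pointwise growth estimate $\abs{\partial_2 \ell(y, y')} \le L(\abs{y} + \abs{y'}) + \abs{\partial_2 \ell(0,0)}$, whence $\abs{\partial_2 \ell(r(Z), g(Z))} \le L(\abs{r(Z)} + \abs{g(Z)}) + \abs{\partial_2 \ell(0,0)}$. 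Since $r \in L^2(Z)$ (as $r(Z) = \mean[Y \mid Z]$ and $Y \in L^2$) and $g \in L^2(Z)$, the candidate gradient is square-integrable, and Cauchy--Schwarz then makes the functional bounded.

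The analytic core is the remainder estimate. For each fixed $z$, the Lipschitz condition on $\partial_2 \ell$ is exactly $L$-smoothness in the second argument, which gives the uniform quadratic bound $\abs{\ell(y, y' + t) - \ell(y, y') - \partial_2 \ell(y, y')\, t} \le \frac{L}{2} t^2$. Applying this pointwise with $y = r(Z)$, $y' = g(Z)$, $t = v(Z)$ and taking expectations yields $\abs{F(g + v) - F(g) - \dotprod{\nabla F(g), v}_{L^2(Z)}} \le \frac{L}{2} \mean[v(Z)^2] = \frac{L}{2} \norm*{v}_{L^2(Z)}^2$. Dividing by $\norm*{v}_{L^2(Z)}$ and letting $v \to 0$ establishes Fréchet differentiability of $F$ with the claimed gradient, and composing with the chain-rule step above completes the proof.

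I expect the main obstacle to be the integrability bookkeeping in the second step rather than the differentiation itself: one has to confirm that the growth bound coming from the Lipschitz assumption, combined with $r \in L^2(Z)$, is genuinely enough to place $\partial_2 \ell(r(\cdot), \meanop[h](\cdot))$ in $L^2(Z)$ for every $h \in \searchset$. This is precisely what guarantees that $\nabla \risk(h)$ is a well-defined element of $L^2(X)$, as asserted in the statement, rather than a merely formal expression. Once square-integrability is secured, the quadratic remainder bound makes the limit argument immediate and uniform, so no delicate interchange of limit and expectation is required.
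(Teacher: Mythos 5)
Your proof is correct, but it follows a genuinely different route from the paper's. The paper computes the G\^ateaux (directional) derivative of $\risk$ directly: it expands $\ell(r(Z), \meanop[h](Z) + \delta \meanop[f](Z))$ via Taylor's formula with a second-order remainder, kills the remainder using the bound $\abs{\partial_2^2 \ell} \le L$ (a consequence of the Lipschitz assumption), identifies the resulting linear functional through the adjoint, and only then upgrades to Fr\'echet differentiability by observing that $h \mapsto D\risk[h]$ is continuous (via the $L$-Lipschitz property of the composition map $f \mapsto \partial_2\ell(r(\cdot), f(\cdot))$ on $L^2(Z)$, citing a G\^ateaux-to-Fr\'echet theorem). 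You instead factor $\risk = F \circ \meanop$, prove Fr\'echet differentiability of the outer functional $F$ in one shot via the descent-lemma bound $\abs{\ell(y, y'+t) - \ell(y,y') - \partial_2\ell(y,y')\,t} \le \tfrac{L}{2}t^2$, and finish with the chain rule for a bounded linear inner map. Both arguments ultimately rest on the same hypothesis ($L$-smoothness of $\ell$ in its second argument), but yours is quantitatively sharper --- it delivers an explicit $\tfrac{L}{2}\norm{v}^2_{L^2(Z)}$ remainder rather than an $o(\norm{v})$ obtained by upgrading --- and it sidesteps both the continuity-of-the-derivative theorem and the (mildly delicate) pointwise Taylor expansion with a $Z$-dependent intermediate point $\theta$. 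Your integrability check for $\partial_2\ell(r(\cdot), \meanop[h](\cdot)) \in L^2(Z)$ matches the paper's Proposition on loss properties (item 1), and the reduction to the chain rule makes the role of $\meanop^*$ transparent. The one thing worth stating explicitly is that $r \in L^2(Z)$, which you use but only parenthetically justify; it follows since $Y = \hstar(X) + \varepsilon \in L^2$ and conditional expectation is a contraction on $L^2$.
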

Both $\meanop$ and $\meanop^{*}$ are unknown operators which commonly appear in NPIV estimation \cite{florens2007, darolles2011}, with Nadaraya-Watson kernels \cite{nadaraya64, watson64} being a classical option for estimating them.
However, the fact that they are nested in \cref{eq: gradient formula} is undesired, and may impose extra computational costs, especially considering that $\partial_2 \ell$ is nonlinear in the second argument for non-quadratic losses.
We overcome this difficulty by leveraging the following characterization of the gradient:
\begin{corollary}
    \label{cor: stochastic gradient expression}
    The gradient of the populational risk satisfies
    \begin{equation}
        \label{eq: stochastic gradient}
        \nabla \risk ( h ) ( x ) = \mean [ \Phi ( x , Z ) \partial_{ 2 } \ell ( r ( Z ), \meanop [ h ] ( Z ) ) ]
    ,\end{equation}
    where $\Phi$ is defined as in \cref{eq: Phi definition}.
\end{corollary}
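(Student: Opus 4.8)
The plan is to start from the analytical gradient formula in Proposition~\ref{prop: gradient expression} and to unfold the adjoint operator $\meanop^{*}$ using its explicit conditional-expectation representation. Writing $g(z) = \partial_{2} \ell(r(z), \meanop[h](z))$, Proposition~\ref{prop: gradient expression} gives $\nabla \risk(h) = \meanop^{*}[g]$, and since $\meanop^{*}[g](x) = \mean[g(Z) \mid X = x]$, all that remains is to rewrite this conditional expectation as an \emph{unconditional} one weighted by the kernel $\Phi$.

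The heart of the argument is a change of reference measure. Using the joint density $p_{X,Z}$ and the conditional density $p_{Z \mid X}(z \mid x) = p_{X,Z}(x,z)/p_{X}(x)$, I would write
\begin{equation*}
    \mean[g(Z) \mid X = x]
    = \int_{\Zspace} g(z)\, \frac{p_{X,Z}(x,z)}{p_{X}(x)} \, \drm z
    = \int_{\Zspace} g(z)\, \Phi(x,z)\, p_{Z}(z) \, \drm z,
\end{equation*}
where the second equality multiplies and divides by $p_{Z}(z)$ and invokes the definition $\Phi(x,z) = p_{X,Z}(x,z)/(p_{X}(x) p_{Z}(z))$ from \cref{eq: Phi definition}, with the convention $0/0 = 0$ handling the null set where the marginals vanish. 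Recognizing the last integral as an expectation against the law $\prob_{Z}$ of $Z$ yields $\mean[g(Z) \mid X = x] = \mean[\Phi(x, Z)\, g(Z)]$, which is precisely \cref{eq: stochastic gradient} after substituting back the definition of $g$.

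The point requiring genuine care is that these manipulations are licit, i.e., that the integrand $z \mapsto \Phi(x,z)\, g(z)$ is $\prob_{Z}$-integrable for $\prob_{X}$-almost every $x$, so that rewriting the conditional expectation and applying Fubini--Tonelli is justified. I expect this to follow from the Cauchy--Schwarz inequality: \cref{assumption Phi} gives $\Phi \in L^{2}(\prob_{X} \otimes \prob_{Z})$, while $g \in L^{2}(Z)$ because $r \in L^{2}(Z)$ and $\meanop[h] \in L^{2}(Z)$ (as $\norm*{\meanop}_{\op} \le 1$ and $h \in \searchset$ is bounded), and the Lipschitz condition in \cref{assumption loss} forces $\partial_{2}\ell$ to grow at most linearly in its arguments, hence to map $L^{2}$ inputs to an $L^{2}$ output. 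This integrability check, together with the measure-theoretic bookkeeping around the convention $0/0 = 0$ on the set where $p_{X} p_{Z}$ vanishes, should be the only technical obstacle; the algebraic identity itself is immediate once the adjoint is expanded.
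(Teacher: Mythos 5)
Your proposal is correct and follows essentially the same route as the paper's own proof: unfold $\meanop^{*}$ as a conditional expectation, pass to the conditional density $p_{Z\mid X}(z\mid x)=p_{X,Z}(x,z)/p_{X}(x)$, multiply and divide by $p_{Z}(z)$ to expose $\Phi$, and read off the unconditional expectation. Your added integrability check via Cauchy--Schwarz is a sound (if slightly more careful) supplement to the paper's brief remark that $p_{X}(x)>0$ holds $\prob_{X}$-almost surely.
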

The benefit of our approach is that \cref{eq: stochastic gradient} enables a more computationally efficient way to estimate the gradient of the risk functional, as we explain next.
From \cref{eq: stochastic gradient}, for a given $ x \in \Xspace $, the random variable  $ \Phi ( x, Z ) \partial_{ 2 } \ell ( r ( Z ), \meanop [ h ] ( Z ) ) $ is an unbiased stochastic estimator of $ \nabla \risk ( h ) ( x ) $. Our stochastic \textit{approximate} gradient is then built using estimators $ \hat{ \Phi }, \hat{ r } $ and $ \hat{ \meanop } $ of $ \Phi, r $ and $ \meanop $ respectively.
With this notation, given a sample $ Z $, we consider
\begin{equation}
    \label{eq: stochastic approximate gradient}
    \widehat{\nabla \risk ( h )} ( x ) = \hat{ \Phi } ( x, Z ) \partial_{ 2 } \ell ( \hat{ r } ( Z ), \hat{ \meanop } [ h ] ( Z ) ).
\end{equation}
We have then substituted the estimation of the operator $\meanop^*$ by the simpler problem of estimating the ratio of densities $\Phi$ and computing its product with $\partial_{ 2 } \ell (  \hat{ r } ( Z ), \hat{ \meanop } [ h ] ( Z ) )$. Moreover, density ratio estimation is an area of active research within the machine learning community, which makes developments in this direction immediately translatable into benefits to our method.

\section{Algorithm: implementation and theory}
\label{sec: sagd-iv}

In \cref{algo: sagdiv} we present \textit{Stochastic Approximate Gradient Descent IV (SAGD--IV)}\footnote{The nomenclature Stochastic Approximate Gradient Descent (SAGD) first appeared in another paper addressing an unrelated topic, \cite{sagd20}, where Langevin dynamics are employed to obtain approximate gradients when samples of the underlying random variable are not available.}, a method for estimating $ \hstar $ using the approximation given by \cref{eq: stochastic approximate gradient}.

\begin{algorithm}[H]
    \caption{SAGD--IV}
    \label{algo: sagdiv}
    \begin{algorithmic}
        \STATE {\bfseries Input:}
        Samples $ \left\{ ( \bz_{ m } )_{ m=1 }^{ M } \right\} $.
        Estimators $ \hat{ \Phi }, \hat{ r } $ and $ \hat{ \meanop } $.
        Sequence of learning rates $ ( \alpha_{ m } )_{ m=1 }^{ M } $.
        Initial guess $ \hat{ h }_{ 0 } \in \searchset $.
        \STATE {\bfseries Output:} $ \hat{ h } $ 
        \FOR{$ 1 \leq m \leq M $}
        \STATE Set $ u_{ m } = \hat{ \Phi } ( \cdot , \bz_{ m } ) \partial_{ 2 } \ell \left( \hat{ r } ( \bz_{ m } ), \hat{ \meanop } [ \hat{ h }_{ m - 1 } ] ( \bz_{ m } ) \right) $
        \STATE Set $ \hat{ h }_{ m }  = \proj_{ \searchset } \left[
            \hat{ h }_{ m-1 } - \alpha_{ m } u_{ m }  
        \right] $
        \ENDFOR
        \STATE Set $ \hat{ h } = \frac{ 1 }{ M } \sum_{ m=1 }^{ M } \hat{ h }_{ m } $ \;
    \end{algorithmic}
\end{algorithm}

\begin{remark}
We note the fact that the internal loop of \cref{algo: sagdiv} only needs samples from the instrumental variable $ Z $ to unfold.
\end{remark}

For our theoretical analysis, all we require of the estimators of $ \Phi, r $ and $ \meanop $ is the following:
\begin{assumption}[Properties of $ \hat{ \Phi }, \hat{ r } $ and $ \hat{ \meanop } $]
    \label{assumption estimators}
    \begin{enumerate}
        \item[] 
        \item $ \hat{ \Phi }, \hat{ r } $ and $ \hat{ \meanop } $ are computed using a dataset $ \dataset $, composed of $N$ samples of the triplet $(X, Z, Y)$, which are independent from the $ Z $ samples used in \cref{algo: sagdiv}.
        \item $ \hat{ r } \in L^{ 2 } ( Z ) $ a.s.;
        \item $ \hat{ \meanop } : L^{ 2 } ( X ) \to L^{ 2 } ( Z ) $ is a bounded linear operator a.s.
        \item $ \displaystyle \norm*{ \hat{ \Phi } }_{ \infty } \defeq \underset{(x,z) \in \Xspace \times \Zspace}{\esssup}                
         \abs*{ \hat{ \Phi } ( x, z ) } < \infty $.
        This implies, in particular, $ \norm*{ \hat{ \Phi } }_{ L^2 ( \prob_{ X } \otimes \prob_{ Z } ) } < \infty $.
        \label{assumption phi hat}
    \end{enumerate}
\end{assumption}

\subsection{On computing $\hat{ \meanop }, \hat{ \Phi }$ and $\hat{r}$}
\label{sec: practical considerations}
\cref{algo: sagdiv} is purposefully agnostic to how the estimators $\hat{\meanop}, \hat{\Phi}$ and $\hat{r}$ are obtained, in order to provide the user with sufficient modeling flexibility.
In what follows, we will explain the options we considered for accomplishing these tasks in the experiments section.
A more detailed description is given in \cref{sec: implementation details}.

Estimating $\Phi$ is a interesting problem in itself as it is a ratio of densities, and it has drawn significant attention from the Machine Learning community. Here, we consider kernel and neural network methods, regarded as the state of the art. On the other hand, not many options are available to compute $\hat{\meanop}$, which means estimating all possible regression of functions of $X$ over $Z$. Here, we chose to use kernel mean embeddings, as was developed \cite{singh2019}. Finally, estimating $r$ is the simplest procedure in our method as it is a regression of $Y$ over $Z$. Similarly to $\Phi$, we also consider kernel and neural network methods.

Based on these options, we decided to specify two variants of SAGD-IV built upon how we compute $\hat{\Phi}$ and $\hat{r}$: \textbf{Kernel SAGD-IV} uses RKHS methods for both estimators, while \textbf{Deep SAGD-IV} employs neural network estimators in these two tasks.
The method for computing $\hat{\meanop}$ is kernel mean embeddings in both variants. 

Under a big data scenario, the dependence of our algorithm on kernel methods to estimate the $\meanop$ operator could be a limitation. However, we are able to avoid this issue for the estimation of $\Phi$ and $r$. Nonetheless, since our algorithm is agnostic to the specific estimator of $\meanop$, we could take advantage of any recent developments in the literature of operator-estimation problems.

\subsection{Risk bound}
\label{sec: risk bound}

Since we are directly optimizing the projected populational risk measure, we are able to provide guarantees for $ \risk ( \hat{ h } ) $ in mean with respect to the training data $ \bz_{ 1:M } = \left\{ \bz_{ 1 }, \dots, \bz_{ m } \right\} $.
Our main result is the following:
\begin{theorem}
    \label{thm: main theorem}
    Let $ \hat{ h }_{ 0 }, \dots, \hat{ h }_{ M-1 } $ be generated according to \cref{algo: sagdiv}.
    Assume that $ \ell $ satisfies \cref{assumption loss}, $ \searchset $ satisfies \cref{assumption on H}, $ \Phi $ satisfies \cref{assumption Phi} and $ \hat{ \Phi }, \hat{ r }, \hat{ \meanop } $ satisfy \cref{assumption estimators}.
    Then, if we let $ \hat{ h } = \frac{1}{M}\sum_{ m=1 }^{ M } \hat{ h }_{ m-1 } $, the following bound holds:
    \begin{align}
            \mean_{ \bz_{ 1:M } } \left[
                \risk ( \hat{ h } ) - \risk ( \hstar )
            \right]
            &\leq \frac{ D^2 }{ 2 M \alpha_{ M } }
            + \frac{ \xi }{ M } \sum_{ m=1 }^{ M } \alpha_{ m }
            + \tau \sqrt{ \zeta } ,
            \label{eq: main bound}
    \end{align}
    where
    \small
    \begin{align*}
        \xi
        &= \frac{ 3 }{ 2 } \norm*{ \hat{ \Phi } }_{ \infty }^2 \left(
            C_{ 0 }^2 + L^2 \norm*{ \hat{ r } }_{ L^2 ( Z ) }^2 + L^2 D ^2 \norm*{ \hat{ \meanop } }_{ \op }^2
        \right), \quad
        \zeta
        = \norm*{ \Phi - \hat{ \Phi } }_{ L^{ 2 } ( \prob_{ X } \otimes \prob_{ Z } ) }^2 + \norm*{ r - \hat{ r } }_{ L^2 ( Z ) }^2 + \norm*{ \meanop - \hat{ \meanop } }_{ \op }^2,\\
        \tau
        &= 2 D \max \bigg\{
            3 ( C_{ 0 }^2 + L^2 \mean [ Y^2 ] + L^2 D^2 ),                2L^2 \norm*{ \hat{ \Phi } }_{ \infty }^2,
            2L^2 D^2 \norm*{ \hat{ \Phi } }_{ \infty }^2
        \bigg\}.
    \end{align*}
    \normalsize
\end{theorem}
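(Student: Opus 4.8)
The plan is to read \cref{algo: sagdiv} as projected stochastic gradient descent on the convex functional $\risk$, but driven by a \emph{biased} gradient oracle, and to run the classical averaging argument while carefully tracking the bias introduced by the plug-in estimators $\hat\Phi,\hat r,\hat\meanop$. First note that $\risk$ is convex: by \cref{prop: gradient expression} it is Fréchet differentiable, and since $\meanop$ is linear and $\ell$ is convex in its second argument (\cref{assumption loss}), the map $h\mapsto\ell(r(Z),\meanop[h](Z))$ is convex, hence so is its expectation. I would start from the nonexpansiveness of $\proj_{\searchset}$ together with $\proj_{\searchset}[\hstar]=\hstar$ (valid since $\hstar\in\searchset$), which after expanding the square yields, for each step,
\[
\norm*{\hat h_m-\hstar}^2 \le \norm*{\hat h_{m-1}-\hstar}^2 - 2\alpha_m\dotprod{u_m,\hat h_{m-1}-\hstar} + \alpha_m^2\norm*{u_m}^2 .
\]

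Next I would condition on the history (equivalently on $\hat h_{m-1}$ and the dataset $\dataset$). Since $\bz_m$ is drawn from $\prob_Z$ independently of the history, the conditional mean of the oracle is $\hat g_m := \mean_{\bz_m}[u_m\mid\hat h_{m-1}] = \mean_Z[\hat\Phi(\cdot,Z)\,\partial_2\ell(\hat r(Z),\hat\meanop[\hat h_{m-1}](Z))]$, to be compared with the exact gradient from \cref{cor: stochastic gradient expression}, namely $\nabla\risk(\hat h_{m-1}) = \mean_Z[\Phi(\cdot,Z)\,\partial_2\ell(r(Z),\meanop[\hat h_{m-1}](Z))]$. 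Splitting $\dotprod{\hat g_m,\hat h_{m-1}-\hstar} = \dotprod{\nabla\risk(\hat h_{m-1}),\hat h_{m-1}-\hstar} + \dotprod{\hat g_m-\nabla\risk(\hat h_{m-1}),\hat h_{m-1}-\hstar}$, I would lower-bound the first inner product by $\risk(\hat h_{m-1})-\risk(\hstar)$ via convexity, and control the second by Cauchy--Schwarz with $\norm*{\hat h_{m-1}-\hstar}\le D$, giving a bias contribution at most $D\,\norm*{\hat g_m-\nabla\risk(\hat h_{m-1})}_{L^2(X)}$. Rearranging, dividing by $2\alpha_m$, taking full expectations, averaging over $m=1,\dots,M$, and using Jensen (convexity of $\risk$) to pass from $\hat h=\frac1M\sum_{m=1}^M\hat h_{m-1}$ to the average of the risks, the telescoping sum $\frac1M\sum_m\frac{1}{2\alpha_m}\bigl(\mean\norm*{\hat h_{m-1}-\hstar}^2-\mean\norm*{\hat h_m-\hstar}^2\bigr)$ is handled by Abel summation; for a nonincreasing step-size sequence and $\norm*{\hat h_m-\hstar}\le D$ it collapses to $\frac{D^2}{2M\alpha_M}$, producing the first term of \cref{eq: main bound}.

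It remains to bound the two error sources uniformly in $m$. For the variance term I would compute $\mean_{\bz_m}\norm*{u_m}_{L^2(X)}^2 = \mean_Z[\psi_m(Z)^2\int_{\Xspace}\hat\Phi(x,Z)^2 p_X(x)\,\drm x]$ with $\psi_m=\partial_2\ell(\hat r,\hat\meanop[\hat h_{m-1}])$; bounding $\int\hat\Phi^2 p_X\le\norm*{\hat\Phi}_{\infty}^2$ and using $\abs{\partial_2\ell(y,y')}\le C_0+L\abs{y}+L\abs{y'}$ (a consequence of \cref{assumption loss}) together with $\norm*{\hat\meanop[\hat h_{m-1}]}\le\norm*{\hat\meanop}_{\op}D$ gives $\mean_{\bz_m}\norm*{u_m}^2\le 2\xi$, so that $\frac1M\sum_m\frac{\alpha_m}{2}\mean\norm*{u_m}^2\le\frac{\xi}{M}\sum_m\alpha_m$. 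The main obstacle is the bias $\norm*{\hat g_m-\nabla\risk(\hat h_{m-1})}_{L^2(X)}$: I would split the integrand difference as $\hat\Phi\,[\partial_2\ell(\hat r,\hat\meanop[h])-\partial_2\ell(r,\meanop[h])] + (\hat\Phi-\Phi)\,\partial_2\ell(r,\meanop[h])$, integrate over $Z$ and then $X$ via Cauchy--Schwarz to extract $L^2$ factors, and use Lipschitzness of $\partial_2\ell$ to turn the first bracket into $\norm*{r-\hat r}_{L^2(Z)}$ and $\norm*{\meanop-\hat\meanop}_{\op}$ (again absorbing $\norm*{h}\le D$), while the second term contributes $\norm*{\Phi-\hat\Phi}_{L^2(\prob_X\otimes\prob_Z)}$ weighted by $\norm*{\partial_2\ell(r,\meanop[h])}_{L^2(Z)}^2\le 3(C_0^2+L^2\mean[Y^2]+L^2D^2)$ (using $\norm*{r}^2\le\mean[Y^2]$ and $\norm*{\meanop}_{\op}\le 1$). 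Collecting the three coefficients, bounding them by their maximum, and applying Cauchy--Schwarz against the vector $(\norm*{\Phi-\hat\Phi},\norm*{r-\hat r},\norm*{\meanop-\hat\meanop}_{\op})$ yields a bias bound of order $\sqrt{\zeta}$, uniform in $m$, which after the factor $D$ and averaging gives the $\tau\sqrt{\zeta}$ term. The delicate points are that this bias does \emph{not} vanish with $M$ (it is the irreducible statistical error of the plug-in estimators) and that every estimate must hold uniformly over $h\in\searchset$ so the per-step bounds can be summed.
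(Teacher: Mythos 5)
Your proposal is correct and follows essentially the same route as the paper's proof: projected SGD analysis with convexity of $\risk$, nonexpansiveness of $\proj_{\searchset}$, Abel summation for the telescoping term, the second-moment bound $\mean_{\bz_m}\norm*{u_m}^2\le 2\xi$ (the paper's Lemma on $\norm*{u_m}$), and the identical add-and-subtract decomposition $\Psi_m(\Phi-\hat\Phi)+\hat\Phi(\Psi_m-\hat\Psi_m)$ for the bias, yielding the $\tau\sqrt{\zeta}$ term (the paper's Lemma on $\nabla\risk-u_m$). The only point worth noting is that you explicitly flag the need for a nonincreasing step-size sequence in the Abel summation, which the paper uses implicitly.
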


It is productive to analyze the RHS of the bound in \cref{eq: main bound} more carefully.
If we choose the sequence $ ( \alpha_{ m } ) $ to satisfy the usual assumptions
\begin{equation*}
    M \alpha_{ M } \to \infty \quad \text{and} \quad \frac{ 1 }{ M } \sum_{ m=1 }^{ M } \alpha_{ m } \to 0
\end{equation*}
as $ M \to \infty $, then the first two terms in the sum vanish as $ M $ grows.
The last term appears due to the fact that we do not know $ \Phi, r $ nor $ \meanop $, but it explicitly quantifies how the estimation errors of $ \hat{ \Phi }, \hat{ r } $ and $ \hat{ \meanop } $ come together to determine the quality of the final estimator.
Furthermore, it converges to zero if the chosen estimation methods are consistent.\footnote{If one desires to substitute the $\zeta$ term for estimator-specific bounds, rates for $\norm*{\Phi - \hat{\Phi}}_{L^2(\prob_X \otimes \prob_Z)}$ can be found in \cite[Theorem~14.16]{sugiyama2012}, while bounds for $\norm*{\meanop - \hat{\meanop}}_{\mathrm{op}}$ are present in \cite[Theorem~2]{singh2019}, \cite[Theorem~5]{talwai2022cmerates} and \cite[Proposition~S5]{singh2024biometrika}.
The term $\norm*{r - \hat{r}}_{L^2(Z)}$ is the risk for a simple real-valued regression problem, and various rates are available depending on the chosen regressor and the degree of smoothness assumed on $r$ \cite{nonparametricregression}.
}

\begin{remark}[$Y$ versus $r(Z)$] 

Since $r = \meanop[\hstar]$, one should compute the risk as in \cref{eq: risk definition}, comparing $\meanop[h](Z)$ with $r(Z)$. However, most works on NPIV \cite{newey2003, singh2019, dualiv2020} compare $\meanop[h](Z)$ directly with $Y$ instead of $r(Z)$.
In \cref{sec: alternative risk} we discuss this difference and its relationship with the estimator $\hat{r}$.

\end{remark}

\begin{remark}[Consistency for $h^\ast$]

Since the NPIV regression is an ill-posed inverse problem, consistency of our algorithm is a very challenging and interesting research question, and it would require additional assumptions. For instance, under strong convexity of the populational risk measure, it is well known that consistency is obtained if the excess risk converges to zero. It can be shown that, for the quadratic loss, strong convexity of the populational risk is equivalent to $\meanop$ having a continuous inverse.

\end{remark}

\section{Numerical experiments}
\label{sec: experiments}

Here\footnote{Code for the experiments is available at \href{https://github.com/Caioflp/sagd-iv}{https://github.com/Caioflp/sagd-iv}}, we compare the performance of the two variants of SAGD-IV explained in \cref{sec: practical considerations}, Kernel SAGD-IV and Deep SAGD-IV, with that of five other notable algorithms for estimating $\hstar$ in the context of continuous responses: KIV, DualIV, DeepGMM, DeepIV and 2SLS. Hyperparameters and other implementation details are discussed in \cref{sec: implementation details}.

\subsection{Continuous response}
\label{sec: continuous response}

To study the performance of our estimator in a continuous response setting, we used the data generating process from \cite{deepgmm2019}, which we recall below:
\begin{align}
    \label{eq: continuous dgp}
        Y &= \hstar ( X ) + \epsilon + \delta, 
        &X = Z_{ 1 } + \epsilon + \gamma, \\ \nonumber
        Z &= ( Z_{ 1 }, Z_{ 2 } ) \sim \unif ( [-3, 3]^2 ), 
        &\epsilon \sim \mathcal{N} ( 0, 1 ), \ \gamma, \delta \sim \mathcal{N} ( 0, 0.1 ).
\end{align}
Thus, the confounding variable is $\epsilon$ and within this specification, four options for $\hstar$ are considered:
\begin{align*}
    \bstep: \quad &\hstar (x) = \ind\{x > 0\}, &
    \babs: \quad &\hstar (x) = \abs{x}, \\
    \blinear: \quad &\hstar (x) = x, &
    \bsin: \quad &\hstar (x) = \sin (x).
\end{align*}
We adjusted each estimator using $3000$ random variable samples (see \cref{remark: sample size}), and tested them on $1000$ samples of $X$.
For each method and response function, we evaluated predictions over $20$ realizations of the data.
Log mean squared error (MSE) box plots and plots of each method's estimator for a randomly chosen realization of the data are displayed in \cref{fig: mse continuous response and plots}.
\begin{figure}[htb]
    \centering
    \hspace*{-1.5cm}
    \subfloat{{\includegraphics[height=5.5cm, valign=t]{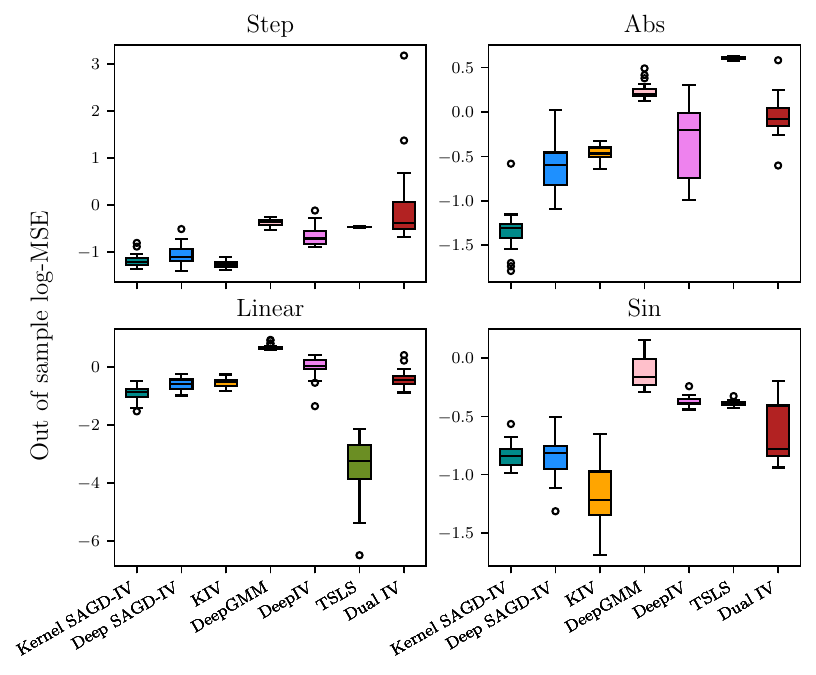} }}%
    \subfloat{{\includegraphics[height=5cm, valign=t]{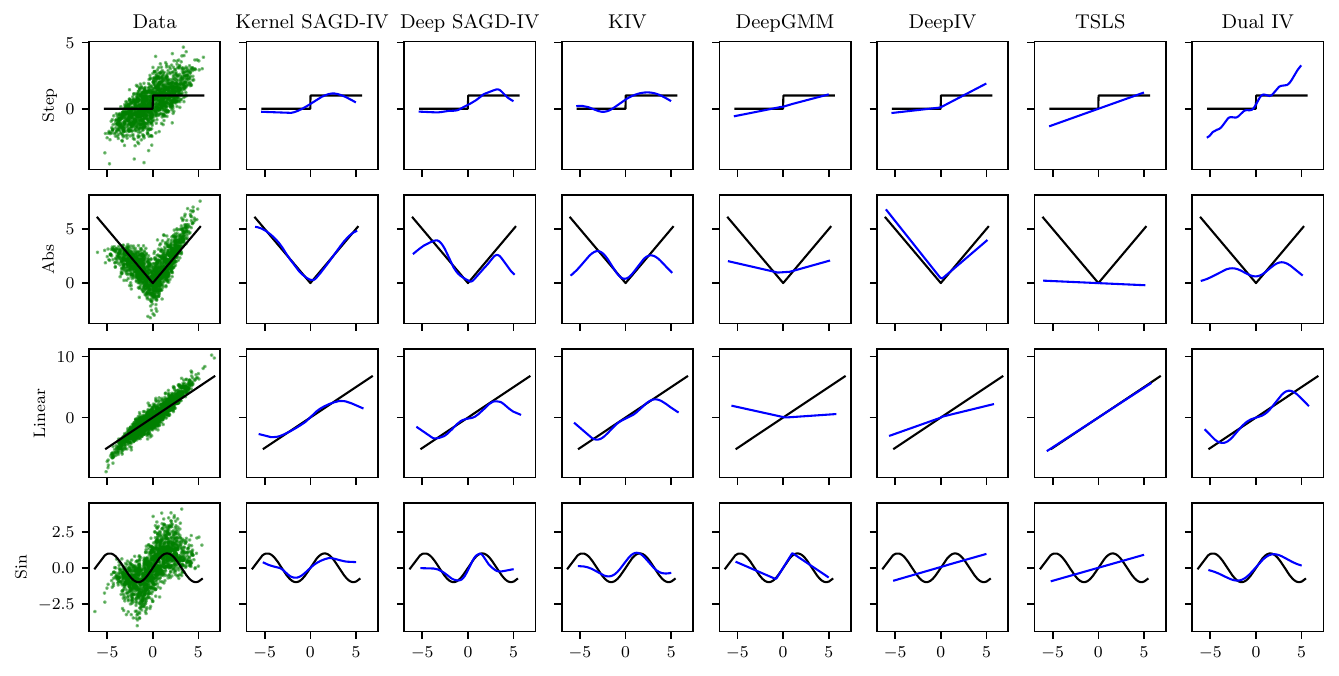} }}%
    \caption{(Left) Log MSE for each model under different response functions. (Right) Plots of each method's estimator in a randomly selected realization of the data. On the left column, we have $Y$ observations in green and the true structural function in black.}%
    \label{fig: mse continuous response and plots}%
\end{figure}

The first thing we can notice from \cref{fig: mse continuous response and plots} is that TSLS dominates the $\blinear$ scenario, as expected, but behaves poorly in all others, showing the importance of nonlinear approaches.

On a second note, both variants of SAGD-IV performed competitively in all scenarios, with the KIV method being a strong competitor in $\bstep$ and $\bsin$.
However, its performance was significantly worse for the $\babs$ response function.
Additionally, it is worth noticing the competitive MSEs of the deep learning variant of SADG-IV, which performed consistently better than the other two neural-network-based algorithms.

Furthermore, an important observation is that, within kernel methods, Kernel SAGD-IV had the best stability/performance tradeoff, having the smallest variance across scenarios.
In contrast, DualIV exhibited the highest variance.
This can be explained by considering DualIV's hyperparameter selection algorithm \cite[Section~4]{dualiv2020}. In order to perform cross-validation to choose the best regularization parameters, the algorithm requires another (different) regularization parameter as an input, which must be chosen without any guidance. The authors state that it was set to a ``small constant''.
This can explain the poor performance, given the importance of regularization parameters for ridge regression algorithms.
We note our method does not suffer from this problem.

\section{Binary Response}
\label{sec: binary}

Here we outline how our methodology can be applied to a scenario where the quadratic loss is not the natural option: that of binary response models.
We show that, by simply modifying the loss $\ell$ in the risk definition and, consequently, in the algorithm, we are able to attack this problem as it is currently formulated in the literature.
This is in clear contrast with recent methods for NPIV estimation, which, when applicable, would require significant effort. We discuss the limitations of current methods and possible extensions in \cref{sec: comparison}.

In the binary response model we consider, the only change we need to make to the data generating process is the following:
\begin{equation*}
    Y = \ind \left\{ \Ystar > 0 \right\}
,\end{equation*}
where $\Ystar = \hstar(X) + \varepsilon$.
That is, we observe a signal which indicates if the response variable is greater than the threshold zero. Although simple, this type of model captures important applications as, for instance, in economics/consumer behavior, where $Y^*$ is the utility the consumer gets from buying a product and we only observe $Y$, the buy/no buy decision. The function $\hstar$ captures the impact of the variable $X$ on the consumers' utility. As usual in the literature, $\hstar$ is interpretable relative to the scale parameter of $\varepsilon$. 
Here, $ X, Z $ and $ \varepsilon $ satisfy the same assumptions as before.

In order to use one of the available NPIV regression methods in the literature, we must first identify a suitable risk functional, of the form shown in \cref{eq: risk definition}. Define $ \eta = \hstar ( X ) - \mean [ \hstar ( X ) \mid Z ] + \varepsilon $, so that $ \mean [ \eta \mid Z ] = 0 $ and
\begin{equation*}
    Y = \ind \left\{ \mean [ \hstar ( X ) \mid Z ] + \eta > 0 \right\}
.\end{equation*}
Denoting by $ F_{ \eta \mid Z } $ the conditional distribution function of $ \eta $ given $ Z $, we find
\begin{align*}
    \mean [ Y \mid Z ]
    &= \prob \left[\mean [\hstar (X) \mid Z] + \eta > 0 \mid Z\right] = 1 - F_{ \eta \mid Z } ( - \mean [ \hstar ( X ) \mid Z ] )
.\end{align*}
Assume that the distribution of $\eta$ given $Z$ is always symmetric around zero, so that $1 - F_{\eta \mid Z}( - y) = F_{\eta \mid Z}(y)$.
We then have $r(Z) = F_{\eta \mid Z} ( \meanop [\hstar] (Z) )$.
Notice that any notion of risk for a given $h \in L^2(X)$ would have to compare $r(Z)$ and $\meanop[h](Z)$ through $F_{\eta \mid Z}$, which is an unknown function.
Hence, without making further assumptions, it is not clear how to setup the risk in a useful way, which hinders current NPIV methods from tackling the problem.

As far as we know, \emph{the only assumption in the literature} which allows one to be \emph{fully nonparametric} in $\hstar$ within the binary outcomes setting was first introduced in \cite{florens2021}.
It posits that $ \eta $ is independent of $ Z $, and has known distribution function $ F $.
Since $\eta = \Ystar - \mean[\Ystar \mid Z]$, it is already uncorrelated with any function of $Z$.
The assumption extends this to independence, and makes available the resulting distribution function.

Under this assumption, we conclude that $r ( Z ) = F ( \meanop [ \hstar ] ( Z ) )$
and, by the negative log-likelihood of the Bernoulli distribution, a natural candidate for $ \ell $ is
\begin{equation*}
    \ell ( y, y' ) = \bce ( y, F(y') )
,\end{equation*}
where $ \bce $ is the binary cross entropy function: $ \bce ( p, q ) = - [ p \log q + ( 1 - p ) \log ( 1 - q ) ] $.
Then, the risk becomes
\begin{equation}
\label{eq: binary risk}
\risk(h) = \mean[\bce(r(Z), F(\meanop[h](Z)))].
\end{equation}
Whereas it is obvious that the quadratic loss satisfies \cref{assumption loss}, it is not clear whether the same is true for the loss $\ell(y, y') = \bce (y, F(y'))$.
Nevertheless, if $F(x) = \sigma(x) \defeq 1 / (1 + \exp(-x/\beta))$, the c.d.f. of a logistic distribution with scale parameter $\beta$, one can verify that \cref{assumption loss} holds.

\subsection{Numerical Experiment}

We mimicked the continuous response DGP making the necessary modifications:
\begin{align*}
    &Y = \ind \{ \mean [\hstar(X) \mid Z] + \eta > 0 \}, 
    &X = Z_1 + \eta + \gamma, \\
    &Z = (Z_1, Z_2) \sim \unif ([-3, 3]^2), 
    &\eta \sim \logistic(0, \beta) \quad \gamma \sim \mathcal{N}(0, 0.1).
\end{align*}
Here, $\beta$ is a scale parameter which was set to $\sqrt{0.1}$.
For response functions, we analyzed the $\bsin$ and $\blinear$ cases, since we must compute $\mean[\hstar (X) \mid Z]$ analytically in order to generate the data.
This computation is easy for the selected scenarios, and we have
\begin{align*}
    \mean[X \mid Z = z] &= z_1, \quad \mean[\sin(X) \mid Z = z] = \frac{\beta\pi \cdot e^{- \frac{0.1}{2}}}{\sinh(\beta \pi)} \sin(z_1)
\end{align*}
As with the continuous response setting, we present log MSE results and samples from the resulting estimators in \cref{fig: MSE binary response and plots}.
\begin{figure}[htb]
    \centering
    \hspace*{-1cm}
    \subfloat{{%
    \includegraphics[height=3cm,valign=c]{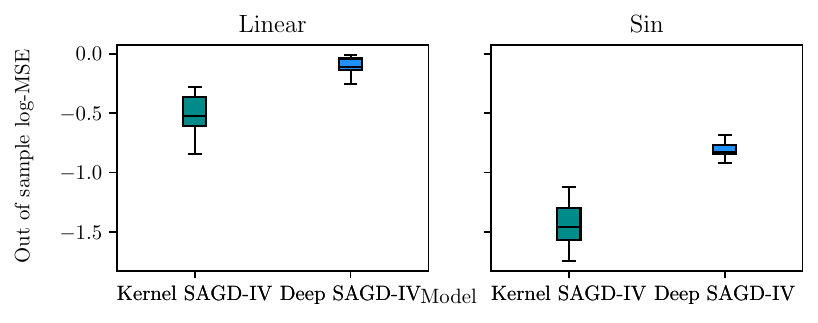} }}%
    \subfloat{{%
    \includegraphics[height=3cm,valign=c]{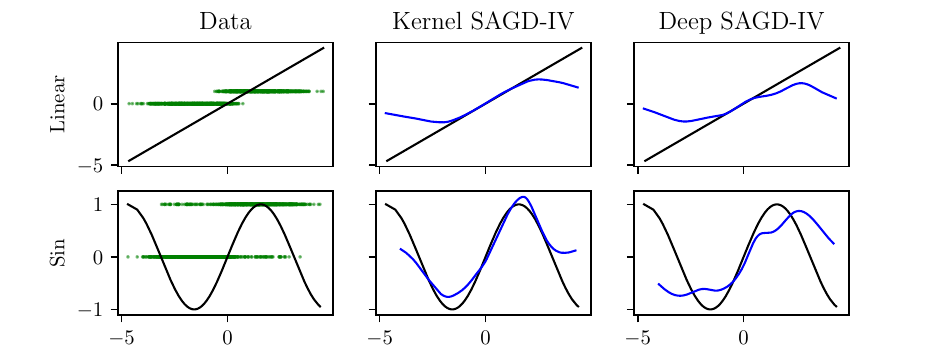} }}%
    \caption{(Left) Log MSE distribution in the binary response DGP. (Right) Plots of each method's estimator in a randomly selected realization of the data for the binary response DGP. On the left column, we have samples from the binary response variable $Y$ in green and the true structural function in black.}%
    \label{fig: MSE binary response and plots}%
\end{figure}

We can immediately see that our kernel based estimator performed remarkably well for both response functions. The binary setup is significantly more challenging, as we are reducing the continuous response $Y$ to a simple binary signal. Kernel SAGD-IV obtained results which are, on average, on par with the continuous response scenario and Deep SAGD-IV had more difficulty uncovering the true structural function.

\section{Conclusion}
\label{sec: conclusion}

In this work, we proposed SAGD-IV, a flexible framework for performing nonparametric instrumental variable regression. It is based on a novel way to tackle the NPIV problem, in which we formulate the minimization of the populational risk as functional stochastic gradient descent. We are then able to naturally consider non-quadratic loss function and incorporate various regression methods under one formulation.
Under mild assumptions, we provided bounds on the excess risk of our estimator.
Furthermore, we empirically demonstrated superior stability and state-of-the-art MSEs for continuous outcomes, as well as a promising performance for the binary regression setup.

\printbibliography

\newpage

\newpage
\appendix
\onecolumn

\section{Proofs}
\label{sec: proofs}

\begin{proposition}
    \label{prop: loss properties}
    Assume that $ \ell $ satisfies \cref{assumption loss}.
    Then:
    \begin{enumerate}
        \item \label{bounded growth} Setting $ C_{ 0 } = \abs{ \partial_{ 2 } \ell ( 0, 0 ) } $, we have $\abs{ \partial_{ 2 } \ell ( y, y' ) } \leq C_{ 0 } + L ( \abs{ y } + \abs{ y' } )$, for all $ y, y' \in \R $;
        \item \label{continuous composition} The map $ f \mapsto \partial_{ 2 } \ell ( r ( \cdot ), f ( \cdot ) ) $ from $ L^{ 2 } ( Z ) $ to $ L^{ 2 } ( Z ) $ is well-defined and $ L $-Lipschitz.
        \item \label{bounded second derivative} The second derivative with respect to the second argument is bounded: $ \abs{ \partial_{ 2 }^{ 2 } \ell ( y, y' ) } \leq L $ for all $ y, y' \in \R $.
    \end{enumerate}
\end{proposition}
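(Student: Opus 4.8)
The plan is to derive all three statements directly from the strong Lipschitz hypothesis in \cref{assumption loss}, namely $\abs{\partial_{2} \ell(y,y') - \partial_{2}\ell(u,u')} \le L(\abs{y-u} + \abs{y'-u'})$, by specializing the four free arguments $y,y',u,u'$ appropriately in each case. The growth bound in part~\ref{bounded growth} is the foundation, and the remaining two parts are essentially corollaries of the same inequality. For part~\ref{bounded growth}, I would set $u = u' = 0$ in the Lipschitz inequality and apply the triangle inequality, obtaining $\abs{\partial_{2}\ell(y,y')} \le \abs{\partial_{2}\ell(0,0)} + L(\abs{y} + \abs{y'}) = C_{0} + L(\abs{y}+\abs{y'})$.

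For part~\ref{continuous composition}, both well-definedness and the Lipschitz constant reduce to pointwise estimates integrated against $\prob_{Z}$. To see that the map lands in $L^{2}(Z)$, I would bound $\abs{\partial_{2}\ell(r(z), f(z))} \le C_{0} + L(\abs{r(z)} + \abs{f(z)})$ using part~\ref{bounded growth}. Since $r \in L^{2}(Z)$ --- recall $r(Z) = \mean[Y\mid Z]$, and conditional expectation is an $L^{2}$-contraction, so $r$ inherits square-integrability from $Y = \hstar(X)+\varepsilon \in L^{2}$ --- and the constant $C_{0}$ is square-integrable because $\prob$ is a probability measure, the right-hand side lies in $L^{2}(Z)$, hence so does the composition by domination.

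For the Lipschitz property in part~\ref{continuous composition}, given $f, g \in L^{2}(Z)$ I would apply the hypothesis pointwise with the first arguments equal (both $r(z)$), which collapses the first difference to zero and leaves $\abs{\partial_{2}\ell(r(z),f(z)) - \partial_{2}\ell(r(z),g(z))} \le L\abs{f(z) - g(z)}$. Squaring and taking expectation over $Z$ then gives $\norm*{\partial_{2}\ell(r(\cdot),f(\cdot)) - \partial_{2}\ell(r(\cdot),g(\cdot))}_{L^{2}(Z)} \le L\norm*{f - g}_{L^{2}(Z)}$. For part~\ref{bounded second derivative}, I would instead fix the first argument and set $u = y$, reading the hypothesis as the statement that $y' \mapsto \partial_{2}\ell(y, y')$ is $L$-Lipschitz for each fixed $y$. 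Since $\ell$ is $C^{2}$ in its second argument this map is differentiable, and the derivative of a differentiable $L$-Lipschitz function is bounded by $L$ in absolute value; passing to the difference-quotient limit $u' \to y'$ yields $\abs{\partial_{2}^{2}\ell(y,y')} \le L$.

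None of these steps presents a genuine obstacle, as every part unpacks the same inequality under a convenient choice of arguments. The only point requiring a moment's care is the well-definedness claim in part~\ref{continuous composition}, where one must remember both to justify $r \in L^{2}(Z)$ and to use that constant functions are square-integrable on the probability space $(\Omega,\mathcal{A},\prob)$ before concluding that the composition maps $L^{2}(Z)$ into itself.
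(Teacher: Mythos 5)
Your proposal is correct and follows essentially the same route as the paper's proof: part 1 via the Lipschitz inequality centered at $(0,0)$ with the triangle inequality, part 2 by the pointwise growth bound for well-definedness and the Lipschitz hypothesis with matched first arguments for the $L$-Lipschitz estimate, and part 3 by reading the hypothesis as Lipschitz continuity of $y' \mapsto \partial_2\ell(y,y')$. You are in fact slightly more explicit than the paper in justifying $r \in L^2(Z)$ and in spelling out the difference-quotient argument for part 3, both of which the paper leaves implicit.
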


\begin{proof}
    \begin{enumerate}
        \item[]
        \item Write $ \partial_{ 2 } \ell ( y, y' ) = \partial_{ 2 } \ell ( y, y' ) - \partial_{ 2 } \ell ( 0, 0 ) + \partial_{ 2 } \ell ( 0, 0 ) $ and apply the triangle inequality as well as \cref{assumption loss} \cref{en: lipschitz gradients}.
        \item From the previous item we know this map is well-defined.
            If $ f $ and $ g $ belong to $ L^{ 2 } ( Z ) $, we have
            \begin{align*}
                \norm*{ \partial_{ 2 } \ell ( r ( \cdot ), f ( \cdot ) ) - \partial_{ 2 } \ell ( r ( \cdot ), g ( \cdot ) ) }_{ L^2 ( Z ) }^2
                &= \mean \left[
                    \abs{ 
                        \partial_{ 2 } \ell ( r ( Z ), f ( Z ) )
                        - \partial_{ 2 } \ell ( r ( Z ), g ( Z ) )
                    }^2
                \right] \\
                &\leq L^2 \mean \left[
                    \abs{ f ( Z ) - g ( Z ) }^2
                \right] \\
                &= L^2 \norm*{ f - g }_{ L^2 ( Z ) }^2
            .\end{align*}
        \item 
        It follows directly from \cref{assumption loss} \cref{en: lipschitz gradients}.\qedhere
    \end{enumerate}
\end{proof}

\begin{proof}[Proof of \cref{prop: gradient expression}]
    \label{proof: gradient expression}
    We start by computing the directional derivative of $ \risk $ at $ h $ in the direction $ f $, denoted by $ D \risk [ h ] ( f ) $:
    \begin{align*}
        D \risk [ h ] ( f )
        &= \lim\limits_{ \delta \to 0 } \frac{ 1 }{ \delta } \left[
            \risk ( h + \delta f ) - \risk ( f )
        \right] \\
        &= \lim\limits_{ \delta \to 0 } \frac{ 1 }{ \delta } \mean \left[
            \ell ( r ( Z ), \meanop [ h + \delta f ] ( Z ) )
            -
            \ell ( r ( Z ), \meanop [ h ] ( Z ) )
        \right] \\
        &= \lim\limits_{ \delta \to 0 } \frac{ 1 }{ \delta } \mean \left[
            \ell ( r ( Z ), \meanop [ h ] ( Z ) + \delta \meanop [ f ] ( Z ) )
            -
            \ell ( r ( Z ), \meanop [ h ] ( Z ) )
        \right] \\
        \begin{split}
            &= \lim\limits_{ \delta \to 0 } \frac{ 1 }{ \delta } \mean \Biggl[
                \delta \partial_{ 2 } \ell ( r ( Z ), \meanop [ h ] ( Z ) ) \cdot \meanop [ f ] ( Z ) \\
            &\hspace{2cm}+ \frac{ \delta^2 }{ 2 } \partial_{ 2 }^2 \ell ( r ( Z ) , \meanop [ h + \theta f ] ( Z ) ) \cdot \meanop [ f ] ( Z )^2
            \Biggr]
        \end{split} \\
        \begin{split}
            &= \mean \left[
                \partial_{ 2 } \ell ( r ( Z ), \meanop [ h ] ( Z ) ) \cdot \meanop [ f ] ( Z )
            \right] \\
            &\hspace{2cm}+ \lim\limits_{ \delta \to 0 } \mean \Biggl[
                \frac{ \delta }{ 2 } \partial_{ 2 }^2 \ell ( r ( Z ) , \meanop [ h + \theta f ] ( Z ) ) \cdot \meanop [ f ] ( Z )^2
            \Biggr]
        \end{split} \\
        &= \mean \left[
            \partial_{ 2 } \ell ( r ( Z ), \meanop [ h ] ( Z ) ) \cdot \meanop [ f ] ( Z )
        \right]
    ,\end{align*}
    where $ \theta \in ( 0, \delta ) $ is due to Taylor's formula.
    The last step is then due to \cref{prop: loss properties} \cref{bounded second derivative}.
    We can in fact expand the calculation a bit more, as follows:
    \begin{align*}
        D \risk [ h ] ( f )
        &= \mean \left[
            \partial_{ 2 } \ell ( r ( Z ), \meanop [ h ] ( Z ) ) \cdot \meanop [ f ] ( Z )
        \right] \\
        &= \dotprod{ \partial_{ 2 } \ell ( r ( \cdot ), \meanop [ h ] ( \cdot ) ), \meanop [ f ] }_{ L^{ 2 } ( Z ) } \\
        &= \dotprod{ \meanop^{ * } [ \partial_{ 2 } \ell ( r ( Z ), \meanop [ h ] ( \cdot ) ) ], f }_{ L^{ 2 } ( X ) }
    .\end{align*}
    This shows that $ \risk $ is G\^ateaux-differentiable, with G\^ateaux derivative at $ h $ given by
    \begin{equation*}
        D \risk [ h ] = \meanop^{ * } [ \partial_{ 2 } \ell ( r ( \cdot ), \meanop [ h ] ( \cdot ) ) ]
    .\end{equation*}
    By \cref{prop: loss properties} \cref{continuous composition}, we have that $ h \mapsto D \risk [ h ] $ is a continuous mapping from $ L^{ 2 } ( X ) $ to $ L^{ 2 } ( X ) $, which implies that $ \risk $ is also Fréchet-differentiable, and both derivatives coincide
    \cite{pathak2018}.
    Therefore,
    \begin{equation}
        \label{eq: risk gradient}
       \nabla \risk ( h ) = \meanop^{ * } [ \partial_{ 2 } \ell ( r ( \cdot ), \meanop [ h ] ( \cdot ) ) ]  
    .\end{equation}
\end{proof}

\begin{proof}[Proof of \cref{cor: stochastic gradient expression}]
    From \cref{prop: gradient expression} we know that
    \begin{align*}
        \nabla \risk ( h ) ( x )
        &= \meanop^{ * } [ \partial_{ 2 } \ell ( r ( \cdot ), \meanop [ h ] ( \cdot ) ) ] ( x ) = \mean [ \partial_{ 2 } \ell ( r ( Z ), \meanop [ h ] ( Z ) ) \mid X = x ] \\
        &= \int_{ \Zspace } p_{ Z\mid X } ( z \mid x ) \partial_{ 2 } \ell ( r ( z ), \meanop [ h ] ( z ) ) \drm z = \int_{ \Zspace } \frac{ p_{ X, Z } ( x, z ) }{ p_{ X } ( x ) } \partial_{ 2 } \ell ( r ( z ), \meanop [ h ] ( z ) ) \drm z \\
        &= \int_{ \Zspace } p_{ Z } ( z ) \cdot \frac{ p_{ X, Z } ( x, z ) }{ p_{ X } ( x ) p_{ Z } ( z ) } \partial_{ 2 } \ell ( r ( z ), \meanop [ h ] ( z ) ) \drm z = \mean [ \Phi ( x, Z ) \partial_{ 2 } \ell ( r ( Z ), \meanop [ h ] ( Z ) ) ]
    ,\end{align*}
    where we define $ \Phi ( x, z ) = p_{ X, Z } ( x, z ) / ( p_{ X } ( x ) p_{ Z } ( z ) ) $ if $ p_{ Z } ( z ) > 0 $ and $ \Phi ( x, z ) = 0 $ if $ p_{ Z } ( z ) = 0 $.
    Notice we may assume that $ p_{ X } ( x ) > 0 $, since the set of $ x \in \Xspace $ such that this happens has probability 1, and $ \nabla \risk ( h ) $, being an element of $ L^2 ( X ) $, is only defined almost surely.
\end{proof}

Before presenting the proof of Theorem \ref{thm: main theorem}, we need to prove two auxiliary lemmas.
To lighten the notation, the symbols $ \norm*{ \cdot } $ and $ \dotprod{ \cdot, \cdot } $, when written without a subscript to specify which space they refer to, will act as the norm and inner product, respectively, of $ L^2 ( X ) $.
\begin{lemma}
    \label{lem: bound u_m}
    In the procedure of Algorithm \ref{algo: sagdiv} we have $ u_{ m } \in L^{ 2 } ( X ) $ for all $ 1 \leq m \leq M $ and, furthermore,
    \begin{equation*}
        \mean_{ \bz_{ 1:M } } [ \norm*{ u_{ m } }^2 ] \leq
        \rho ( \hat{ \Phi }, \hat{ r }, \hat{ \meanop } ) 
    ,\end{equation*}
    where
    \begin{equation*}
        \rho ( \hat{ \Phi }, \hat{ r }, \hat{ \meanop } ) =
        3 \norm*{ \hat{ \Phi } }_{ \infty }^2 \left(
            C_{ 0 }^2 + L^2 \norm*{ \hat{ r } }_{ L^2 ( Z ) }^2 + L^2 D^2 \norm*{ \hat{ \meanop } }_{ \op }^2
        \right)
    .\end{equation*}
\end{lemma}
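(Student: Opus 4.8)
The plan is to exploit the fact that, at each step $m$, the increment $u_m$ factors as a function of $x$ times a scalar that depends only on $\bz_m$ and $\hat{h}_{m-1}$. Writing $c_m = \partial_{2} \ell ( \hat{r}(\bz_m), \hat{\meanop}[\hat{h}_{m-1}](\bz_m) )$, which does not depend on $x$, we have $u_m(x) = \hat{\Phi}(x,\bz_m)\, c_m$. First I would compute the $L^2(X)$ norm directly:
\begin{equation*}
    \norm*{ u_m }^2 = c_m^2 \int_{ \Xspace } \hat{\Phi}(x,\bz_m)^2\, p_{X}(x) \drm x \leq c_m^2 \norm*{ \hat{\Phi} }_{\infty}^2,
\end{equation*}
where the bound uses \cref{assumption estimators} \cref{assumption phi hat} to extract the essential supremum and the fact that $\int p_X = 1$. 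Since $\hat{r} \in L^2(Z)$ and $\hat{\meanop}$ is a bounded operator almost surely, the scalar $c_m$ is finite almost surely, which already settles the claim that $u_m \in L^2(X)$.

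Next I would control the scalar $c_m^2$. By the linear growth bound of \cref{prop: loss properties} \cref{bounded growth}, $\abs{ c_m } \leq C_0 + L\abs{ \hat{r}(\bz_m) } + L\abs{ \hat{\meanop}[\hat{h}_{m-1}](\bz_m) }$, so the elementary inequality $(a+b+c)^2 \leq 3(a^2+b^2+c^2)$ gives
\begin{equation*}
    \norm*{ u_m }^2 \leq 3 \norm*{ \hat{\Phi} }_{\infty}^2 \left( C_0^2 + L^2 \hat{r}(\bz_m)^2 + L^2 \bigl( \hat{\meanop}[\hat{h}_{m-1}](\bz_m) \bigr)^2 \right).
\end{equation*}

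The last and most delicate step is taking $\mean_{ \bz_{1:M} }$, where the dependence structure must be handled with care: $\hat{h}_{m-1}$ is a function of $\bz_1, \dots, \bz_{m-1}$, whereas $\bz_m$ is drawn independently, and by \cref{assumption estimators} the estimators $\hat{\Phi}, \hat{r}, \hat{\meanop}$ depend only on the auxiliary dataset $\dataset$, which is independent of all the $\bz$ samples. Consequently, $\norm*{ \hat{\Phi} }_{\infty}$, $C_0$, $L$, $D$ are constants for this expectation and pull out. For the middle term, since $\bz_m \sim \prob_Z$ is independent of $\hat{r}$, we get $\mean[ \hat{r}(\bz_m)^2 ] = \norm*{ \hat{r} }_{ L^2(Z) }^2$. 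For the final term I would condition on $\sigma(\bz_1, \dots, \bz_{m-1})$, which fixes $\hat{h}_{m-1}$, so that the inner expectation over $\bz_m$ equals $\norm*{ \hat{\meanop}[\hat{h}_{m-1}] }_{ L^2(Z) }^2 \leq \norm*{ \hat{\meanop} }_{\op}^2 \norm*{ \hat{h}_{m-1} }^2 \leq \norm*{ \hat{\meanop} }_{\op}^2 D^2$, where the bound on $\norm*{ \hat{h}_{m-1} }$ follows from $\hat{h}_{m-1} \in \searchset$ (guaranteed by the projection step) together with \cref{assumption on H}. Taking the outer expectation preserves this deterministic bound and collects the three pieces into exactly $\rho(\hat{\Phi}, \hat{r}, \hat{\meanop})$.

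I expect the bookkeeping around conditional independence to be the only point requiring real care — specifically, verifying that $\bz_m$ is genuinely independent of $\hat{h}_{m-1}$ so that the tower property applies cleanly to the $\hat{\meanop}[\hat{h}_{m-1}](\bz_m)$ term; this is precisely where \cref{assumption estimators}'s requirement that $\dataset$ be independent of the algorithm's $\bz$ samples enters. Everything else is a routine application of the loss-regularity facts collected in \cref{prop: loss properties}.
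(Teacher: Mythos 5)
Your proposal is correct and follows essentially the same route as the paper's proof: pull out $\norm*{\hat{\Phi}}_{\infty}^2$ from the $L^2(X)$ norm, apply the linear growth bound on $\partial_2 \ell$ from \cref{prop: loss properties} with $(a+b+c)^2 \le 3(a^2+b^2+c^2)$, and then bound the operator term by $D^2 \norm*{\hat{\meanop}}_{\op}^2$ using $\hat{h}_{m-1} \in \searchset$. Your explicit conditioning on $\sigma(\bz_1,\dots,\bz_{m-1})$ before integrating out $\bz_m$ is a slightly more careful bookkeeping of the same step the paper performs implicitly, and is welcome but not a different argument.
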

\begin{proof}
    By Assumption \ref{assumption estimators} we have:
    \begin{align*}
        \norm*{ u_{ m } }_{ L^{ 2 } ( X ) }^2
        &= \norm*{
            \hat{ \Phi } ( \cdot, \bz_{ m } ) \partial_{ 2 } \ell \left(
                \hat{ r } ( \bz_{ m } ), \hat{ \meanop } [ \hat{ h }_{ m-1 } ] ( \bz_{ m } )
            \right)
        }_{ L^{ 2 } ( X ) }^2  \\
        &= \mean_{ X } \left[
            \abs{ 
                \hat{ \Phi } ( X, \bz_{ m } ) \partial_{ 2 } \ell \left(
                    \hat{ r } ( \bz_{ m } ),
                    \hat{ \meanop } [ \hat{ h }_{ m-1 } ] ( \bz_{ m } )
                \right)
            }^2
        \right]  \\
        &\leq \partial_{ 2 } \ell (
            \hat{ r } ( \bz_{ m } ),
            \hat{ \meanop } [ \hat{ h }_{ m-1 } ] ( \bz_{ m } )
        )^2
        \norm*{ \hat{ \Phi } }_{ \infty }^2  \\
        &< \infty 
    .\end{align*}
    Hence, $ u_{ m } \in L^{ 2 } ( X ) $ for all $ m $.
    This computation and Proposition \ref{prop: loss properties} Item \ref{bounded growth} then imply
    \begin{align*}
        \mean_{ \bz_{ 1:M } } \left[
            \norm*{ u_{ m } }^2
        \right]
        &\leq 3 \norm*{ \hat{ \Phi } }^2_{ \infty } 
        \left(
            C_{ 0 }^2 + L^2 \left(
                \norm*{ \hat{ r } }_{ L^2 ( Z ) }^2
                + \norm*{ \hat{ \meanop } [ \hat{ h }_{ m-1 } ] }_{ L^2 ( Z ) }^2
            \right)
        \right) \\
        &\leq 3 \norm*{ \hat{ \Phi } }^2_{ \infty }
        \left(
            C_{ 0 }^2 + L^2 \left(
                \norm*{ \hat{ r } }_{ L^2 ( Z ) }^2
                + \norm*{ \hat{ \meanop } }_{ \op }^2 \norm*{ \hat{ h }_{ m-1 } }^2
            \right)
        \right) \\
        &\leq 3 \norm*{ \hat{ \Phi } }^2_{ \infty }
        \left(
            C_{ 0 }^2 + L^2 \left(
                \norm*{ \hat{ r } }_{ L^2 ( Z ) }^2
                + D^2 \norm*{ \hat{ \meanop } }_{ \op }^2
            \right)
        \right) \\
        &= 3 \norm*{ \hat{ \Phi } }^2_{ \infty }
        \left(
            C_{ 0 }^2 + L^2 \norm*{ \hat{ r } }_{ L^2 ( Z ) }^2
                + L^2 D^2 \norm*{ \hat{ \meanop } }_{ \op }^2
        \right) \defeq \rho ( \hat{ \Phi }, \hat{ r }, \hat{ \meanop } ). \qedhere
    \end{align*}
\end{proof}
\begin{lemma}
    \label{lem: bound u_m - grad}
    In the procedure of Algorithm \ref{algo: sagdiv} we have
    \begin{equation*}
        \norm*{
            \mean_{ \bz_{ m } } [
                \nabla \risk ( \hat{ h }_{ m-1 } ) - u_{ m }
            ]
        }
        \leq
        \kappa ( \hat{ \Phi } ) \left(
            \norm*{ \Phi - \hat{ \Phi } }_{ L^{ 2 } ( \prob_{ X } \otimes \prob_{ Z } ) }^2 + \norm*{ r - \hat{ r } }_{ L^2 ( Z ) }^2 + \norm*{ \meanop - \hat{ \meanop } }_{ \op }^2
        \right)^{ \frac{ 1 }{ 2 } }
    ,\end{equation*}
    where
    \begin{equation*}
        \kappa^2 ( \hat{ \Phi } ) \defeq 2 \max \left\{
            3 ( C_{ 0 }^2 + L^2 \mean [ Y^2 ] + L^2 D^2 ),
            2L^2 \norm*{ \hat{ \Phi } }_{ \infty }^2,
            2L^2 D^2 \norm*{ \hat{ \Phi } }_{ \infty }^2
        \right\}
    .\end{equation*}
\end{lemma}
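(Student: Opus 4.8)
The plan is to bound the bias of the approximate gradient directly, exploiting that both $\nabla\risk(\hat h_{m-1})$ and the conditional mean $\mean_{\bz_m}[u_m]$ are integrals against the law of $Z$. Since $\hat h_{m-1}$ is a function of $\bz_{1:m-1}$ only and the estimators $\hat\Phi,\hat r,\hat\meanop$ are built from the independent dataset $\dataset$ (\cref{assumption estimators}), \cref{cor: stochastic gradient expression} and the definition of $u_m$ give, writing $h\defeq\hat h_{m-1}$,
\[
\nabla\risk(h)(x)-\mean_{\bz_m}[u_m](x)=\mean_Z\!\left[\Phi(x,Z)\,\partial_2\ell(r(Z),\meanop[h](Z))-\hat\Phi(x,Z)\,\partial_2\ell(\hat r(Z),\hat\meanop[h](Z))\right].
\]
Setting $g(z)=\partial_2\ell(r(z),\meanop[h](z))$ and $\hat g(z)=\partial_2\ell(\hat r(z),\hat\meanop[h](z))$, I would add and subtract $\hat\Phi\,g$ to split the integrand as $(\Phi-\hat\Phi)g+\hat\Phi(g-\hat g)$ and apply the triangle inequality in $L^2(X)$, obtaining two terms $T_1$ and $T_2$.

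For $T_1=\norm*{\mean_Z[(\Phi(\cdot,Z)-\hat\Phi(\cdot,Z))g(Z)]}$, I would apply Cauchy--Schwarz in $L^2(Z)$ pointwise in $x$, square, and integrate $p_X(x)\drm x$; by Tonelli this equals $\norm*{g}_{L^2(Z)}^2\,\norm*{\Phi-\hat\Phi}_{L^2(\prob_X\otimes\prob_Z)}^2$. The factor $\norm*{g}_{L^2(Z)}^2$ is controlled by item~\ref{bounded growth} of \cref{prop: loss properties} together with $(a+b+c)^2\le 3(a^2+b^2+c^2)$, the bound $\norm*{r}_{L^2(Z)}^2=\mean[\mean[Y\mid Z]^2]\le\mean[Y^2]$, and $\norm*{\meanop[h]}_{L^2(Z)}^2\le\norm*{\meanop}_{\op}^2\norm*{h}^2\le D^2$ (using $\norm*{\meanop}_{\op}\le1$ and $h\in\searchset$), yielding $T_1^2\le 3(C_0^2+L^2\mean[Y^2]+L^2D^2)\,\norm*{\Phi-\hat\Phi}_{L^2(\prob_X\otimes\prob_Z)}^2$. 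For $T_2=\norm*{\mean_Z[\hat\Phi(\cdot,Z)(g(Z)-\hat g(Z))]}$, I would instead bound $\abs{\hat\Phi}$ by $\norm*{\hat\Phi}_\infty$ and use Jensen, so that $T_2^2\le\norm*{\hat\Phi}_\infty^2\,\norm*{g-\hat g}_{L^2(Z)}^2$; the Lipschitz hypothesis on $\partial_2\ell$ (\cref{assumption loss}) and $(a+b)^2\le2(a^2+b^2)$ then give $\norm*{g-\hat g}_{L^2(Z)}^2\le 2L^2(\norm*{r-\hat r}_{L^2(Z)}^2+D^2\norm*{\meanop-\hat\meanop}_{\op}^2)$, where I again pass from $\norm*{(\meanop-\hat\meanop)[h]}_{L^2(Z)}$ to the operator norm via $\norm*{h}\le D$.

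Finally I would combine the estimates through $(T_1+T_2)^2\le 2(T_1^2+T_2^2)$ and factor out the maximum of the three coefficients $3(C_0^2+L^2\mean[Y^2]+L^2D^2)$, $2L^2\norm*{\hat\Phi}_\infty^2$ and $2L^2D^2\norm*{\hat\Phi}_\infty^2$; this reproduces $\kappa^2(\hat\Phi)$ multiplying the sum of the three squared estimation errors, and taking square roots gives the claim. I expect the only genuine subtlety to be the bookkeeping in the two Cauchy--Schwarz steps: ensuring the $L^2(Z)$-pairing against $g$ in $T_1$ integrates (via Tonelli) into \emph{precisely} the $L^2(\prob_X\otimes\prob_Z)$ norm of $\Phi-\hat\Phi$, and using the crude pointwise bound $\norm*{\hat\Phi}_\infty$ rather than Cauchy--Schwarz in $T_2$ so that the $r$- and $\meanop$-errors separate into the two distinct coefficients appearing in $\kappa^2$. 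Everything else is a routine invocation of the assumptions already recorded.
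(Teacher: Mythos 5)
Your proposal is correct and follows essentially the same route as the paper's proof: the same add-and-subtract decomposition of $\Phi\,\Psi_m-\hat\Phi\,\hat\Psi_m$, the same Cauchy--Schwarz/Tonelli step producing the $L^2(\prob_X\otimes\prob_Z)$ norm, the same three constituent bounds, and the same $(a+b)^2\le 2(a^2+b^2)$ plus max-of-coefficients assembly of $\kappa^2(\hat\Phi)$. The only cosmetic difference is that you bound the second term via $\norm*{\hat\Phi}_\infty$ pointwise with Jensen, whereas the paper uses Cauchy--Schwarz followed by $\norm*{\hat\Phi}_{L^2(\prob_X\otimes\prob_Z)}\le\norm*{\hat\Phi}_\infty$; both give the identical constant.
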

\begin{proof}
    To ease the notation, we define
    \begin{align*}
        \Psi_{ m } ( Z ) &\defeq \partial_{ 2 } \ell ( r ( Z ), \meanop [ \hat{ h }_{ m-1 } ] ( Z ) ), \\
        \hat{ \Psi }_{ m } ( Z ) &\defeq \partial_{ 2 } \ell ( \hat{ r } ( Z ), \hat{ \meanop } [ \hat{ h }_{ m-1 } ] ( Z ) )
    .\end{align*}
    Let's expand the definition of $ \norm*{ \cdot } $:
    \begin{align*}
        \norm*{
            \mean_{ \bz_{ m } } [
                \nabla \risk ( \hat{ h }_{ m-1 } ) - u_{ m }
            ]
        }
        &= \mean_{ X } \left[
            \mean_{ \bz_{ m } } \left[
                \nabla \risk ( \hat{ h }_{ m-1 } ) ( X ) - u_{ m } ( X )
            \right]^2
        \right]^{ \frac{ 1 }{ 2 } } \\
        &= \mean_{ X } \left[
            \left(
                \nabla \risk ( \hat{ h }_{ m-1 } ) ( X ) - \mean_{ \bz_{ m } } \left[ u_{ m } ( X ) \right]
            \right)^2
        \right]^{ \frac{ 1 }{ 2 } } \\
        &= \mean_{ X } \left[
            \left(
                \mean_{ Z } \left[
                    \Phi ( X, Z ) \Psi_{ m } ( Z )
                \right]
                - \mean_{ \bz_{ m } } [
                    \hat{ \Phi } ( X, \bz_{ m } ) \hat{ \Psi }_{ m } ( \bz_{ m } )
                ]
            \right)^2
        \right]^{ \frac{ 1 }{ 2 } } \\
        &= \mean_{ X } \left[
            \left(
                \mean_{ Z } [
                    \Phi ( X, Z ) \Psi_{ m } ( Z )
                    - \hat{ \Phi } ( X, Z ) \hat{ \Psi }_{ m } ( Z )
                ]
            \right)^2
        \right]^{ \frac{ 1 }{ 2 } }
    ,\end{align*}
    Now we add and subtract $ \hat{ \Phi } ( X, Z ) \Psi_{ m } ( Z ) $, so that
    \begin{align*}
        &\mean_{ X } \left[
            \left(
                \mean_{ Z } \left[
                    \Phi ( X, Z ) \Psi_{ m } ( Z )
                    - \hat{ \Phi } ( X, Z ) \hat{ \Psi }_{ m } ( Z )
                \right]
            \right)^2
        \right]^{ \frac{ 1 }{ 2 } } \\
        &\hspace{1cm}
        = \mean_{ X } \left[
            \left(
                \mean_{ Z } \left[
                    \Psi_{ m } ( Z ) (
                        \Phi ( X, Z ) - \hat{ \Phi } ( X, Z )
                    )
                    + \hat{ \Phi } ( X, Z ) (
                        \Psi_{ m } ( Z ) - \hat{ \Psi }_{ m } ( Z )
                    )
                \right]
            \right)^2
        \right]^{ \frac{ 1 }{ 2 } } \\
        &\hspace{1cm}
        \leq \mean_{ X } \left[
            \left(
                \norm*{ \Psi_{ m } }_{ L^{ 2 } ( Z ) } \norm*{ \Phi ( X, \cdot ) - \hat{ \Phi } ( X, \cdot ) }_{ L^{ 2 } ( Z ) }
                + \norm*{ \hat{ \Phi } ( X, \cdot ) }_{ L^{ 2 } ( Z ) } \norm*{ \Psi_{ m } - \hat{ \Psi }_{ m } }_{ L^{ 2 } ( Z ) }
            \right)^2
        \right]^{ \frac{ 1 }{ 2 } } \\
        &\hspace{1cm}
        \leq \sqrt{ 2 } \mean_{ X } \left[
            \norm*{ \Psi_{ m } }_{ L^{ 2 } ( Z ) }^2 \norm*{ \Phi ( X, \cdot ) - \hat{ \Phi } ( X, \cdot ) }_{ L^{ 2 } ( Z ) }^2
            + \norm*{ \hat{ \Phi } ( X, \cdot ) }_{ L^{ 2 } ( Z ) }^2 \norm*{ \Psi_{ m } - \hat{ \Psi }_{ m } }_{ L^{ 2 } ( Z ) }^2
        \right]^{ \frac{ 1 }{ 2 } } \\
        &\hspace{1cm}
        = \sqrt{ 2 } \left(
            \norm*{ \Psi_{ m } }_{ L^{ 2 } ( Z ) }^2 \norm*{ \Phi - \hat{ \Phi } }_{ L^{ 2 } ( \prob_{ X } \otimes \prob_{ Z } ) }^2
            + \norm*{ \hat{ \Phi } }_{ L^{ 2 } ( \prob_{ X } \otimes \prob_{ Z } ) }^2 \norm*{ \Psi_{ m } - \hat{ \Psi }_{ m } }_{ L^{ 2 } ( Z ) }^2
        \right)^{ \frac{ 1 }{ 2 } }
    ,\end{align*}
    where
    \begin{equation*}
        \norm*{ \Phi }_{ L^{ 2 } ( \prob_{ X } \otimes \prob_{ Z } ) }^2 = \int_{ \mathcal{X} \times \mathcal{Z} } \Phi ( x, z )^2 p ( x ) p ( z ) \drm x \ddrm z
    \end{equation*}
    is the norm with respect to the independent coupling of the distributions of $ X $ and $ Z $.
    By Proposition \ref{prop: loss properties}.\ref{bounded growth} we have
    \begin{align*}
        \norm*{ \Psi_{ m } }_{ L^{ 2 } ( Z ) }^2
        &= \mean_{ Z } \left[
            \partial_{ 2 } \ell ( r ( Z ), \meanop [ \hat{ h }_{ m-1 } ] ( Z ) )^2
        \right] \\
        &\leq \mean_{ Z } \left[
            \left(
                C_{ 0 } + L \left(
                    \abs{ r ( Z ) } + \abs{ \meanop [ \hat{ h }_{ m-1 } ] ( Z ) }
                \right)
            \right)^2
        \right] \\
        &\leq 3 \left(
            C_{ 0 }^2 + L^2 \norm*{ r }_{ L^{ 2 } ( Z ) }^2 + L^2 \norm*{ \meanop [ \hat{ h }_{ m-1 } ] }_{ L^{ 2 } ( Z ) }^2
        \right) \\
        &\leq 3 \left(
            C_{ 0 }^2 + L^2 \mean [ Y^2 ] + L^2 D^2
        \right)
    .\end{align*}
    It is also clear that, by Assumption \ref{assumption estimators},
    \begin{equation*}
        \norm*{ \hat{ \Phi } }_{ L^2 ( \prob_{ X } \otimes \prob_{ Z } ) }^2 \leq \norm*{ \hat{ \Phi } }_{ \infty }^2
    .\end{equation*}
    Finally, by Assumption \ref{assumption loss}.\ref{en: lipschitz gradients} we also have
    \begin{align*}
        \norm*{ \Psi_{ m } - \hat{ \Psi }_{ m } }_{ L^{ 2 } ( Z ) }^2
        &= \mean_{ Z } \left[
            \left(
                \partial_{ 2 } \ell ( r ( Z ), \meanop [ \hat{ h }_{ m-1 } ] ( Z ) )
                - \partial_{ 2 } \ell ( \hat{ r } ( Z ), \hat{ \meanop } [ \hat{ h }_{ m-1 } ] ( Z ) )
            \right)^2
        \right] \\
        &\leq 2L^2 \left(
            \norm*{ r - \hat{ r } }_{ L^{ 2 } ( Z ) }^2 + \norm*{ ( \meanop - \hat{ \meanop } ) [ \hat{ h }_{ m-1 } ] }_{ L^{ 2 } ( Z ) }^2
        \right) \\
        &\leq 2L^2 \left(
            \norm*{ r - \hat{ r } }_{ L^{ 2 } ( Z ) }^2 + D^2 \norm*{ \meanop - \hat{ \meanop } }_{ \op }^2
        \right)
    .\end{align*}
    To combine all terms, we first define
    \begin{equation*}
        \kappa^2 ( \hat{ \Phi } ) \defeq 2 \max \left\{
            3 ( C_{ 0 }^2 + L^2 \mean [ Y^2 ] + L^2 D^2 ),
            2L^2 \norm*{ \hat{ \Phi } }_{ \infty }^2,
            2L^2 D^2 \norm*{ \hat{ \Phi } }_{ \infty }^2
        \right\}
    .\end{equation*}
    Then, it's easy to see that
    \begin{align*}
        \norm*{
            \mean_{ \bz_{ m } } [
                \nabla \risk ( \hat{ h }_{ m-1 } ) - u_{ m }
            ]
        } \leq \kappa ( \hat{ \Phi } ) \left(
            \norm*{ \Phi - \hat{ \Phi } }_{ L^{ 2 } ( \prob_{ X } \otimes \prob_{ Z } ) }^2 + \norm*{ r - \hat{ r } }_{ L^2 ( Z ) }^2 + \norm*{ \meanop - \hat{ \meanop } }_{ \op }^2
        \right)^{ \frac{ 1 }{ 2 } }
    ,\end{align*}
    as we wanted to show.
\end{proof}

We now have everything needed to show the
\begin{proof}[Proof of Theorem \ref{thm: main theorem}]
    We start by checking that $ \risk $ is convex in $ \searchset $:
    if $ h, g \in \searchset $ and $ \lambda \in [ 0, 1 ] $, then
    \begin{align*}
        \risk ( \lambda h + ( 1 - \lambda ) g )
        &= \mean [ \loss ( r ( Z ), \meanop [ \lambda h + ( 1 - \lambda ) g ] ( Z ) ) ] \\
        &= \mean [ \loss ( r ( Z ), \lambda \meanop [ h ] ( Z ) + ( 1 - \lambda ) \meanop [ g ] ( Z ) ) ] \\
        &\leq \lambda \mean [ \loss ( r ( Z ), \meanop [ h ] ( Z ) ) ] + ( 1 - \lambda ) \mean [ \loss ( r ( Z ), \meanop [ g ] ( Z ) ) ] \\
        &= \lambda \risk ( h ) + ( 1 - \lambda ) \risk ( g )
    .\end{align*}
    By Assumption \ref{assumption on H}, $\hstar \in \searchset$\footnote{ We want to remind the reader that it is enough to assume that there exists $ \bar{ h } \in ( \hstar + \ker \meanop ) \cap \searchset $. By the definition of $ \hbar $, we have $ \risk ( \hbar ) = \risk ( \hstar ) $.}.
    By the Algorithm \ref{algo: sagdiv} procedure, we have
    \begin{align*}
        \frac{ 1 }{ 2 } \norm*{ \hat{ h }_{ m } - \hstar }^2
        &= \frac{ 1 }{ 2 } \norm{ \proj_{ \searchset } \left[ \hat{ h }_{ m-1 } - \alpha_{ m } u_{ m } \right] - \hstar }^2 \\
        &\leq \frac{ 1 }{ 2 } \norm*{ \hat{ h }_{ m-1 } - \alpha_{ m } u_{ m } - \hstar }^2 \\
        &= \frac{ 1 }{ 2 } \norm*{ \hat{ h }_{ m-1 } - \hstar }^2
        - \alpha_{ m } \dotprod{ u_{ m }, \hat{ h }_{ m-1 } - \hstar}
        + \frac{ \alpha_{ m }^2 }{ 2 } \norm*{ u_{ m } }^2
    .\end{align*}
    After adding and subtracting $ \alpha_{ m } \dotprod{ \nabla \risk ( \hat{ h }_{ m-1 } ), \hat{ h }_{ m-1 } - \hstar } $, we are left with
    \begin{equation*}
        \frac{ 1 }{ 2 } \norm*{ \hat{ h }_{ m-1 } - \hstar }^2
        - \alpha_{ m } \dotprod{ u_{ m } - \nabla \risk ( \hat{ h }_{ m-1 } ), \hat{ h }_{ m-1 } - \hstar}
        + \frac{ \alpha_{ m }^2 }{ 2 } \norm*{ u_{ m } }^2
        - \alpha_{ m } \dotprod{ \nabla \risk ( \hat{ h }_{ m-1 } ), \hat{ h }_{ m-1 } - \hstar }
    .\end{equation*}
    Applying the first order convexity inequality on the last term give us, in total,
    \begin{align*}
        \frac{ 1 }{ 2 } \norm*{ \hat{ h }_{ m } - \hstar }^2
        &\leq
        \frac{ 1 }{ 2 } \norm*{ \hat{ h }_{ m-1 } - \hstar }^2
        - \alpha_{ m } \dotprod{ u_{ m } - \nabla \risk ( \hat{ h }_{ m-1 } ), \hat{ h }_{ m-1 } - \hstar } \\
        &\hspace{1.5cm}
        + \frac{ \alpha_{ m }^2 }{ 2 } \norm*{ u_{ m } }^2
        - \alpha_{ m } ( \risk ( \hat{ h }_{ m-1 } ) - \risk ( \hstar ) )
    .\end{align*}
    Hence, making this substitution and rearranging terms, we get
    \begin{align*}
        \risk ( \hat{ h }_{ m-1 } ) - \risk ( \hstar )
        &\leq
        \frac{ 1 }{ 2 \alpha_{ m } } \left(
            \norm*{ \hat{ h }_{ m-1 } - \hstar }^2
            -
            \norm*{ \hat{ h }_{ m } - \hstar }^2
        \right) \\
        &\hspace{1.5cm}+ \frac{ \alpha_{ m } }{ 2 } \norm*{ u_{ m } }^2
        - \dotprod{ u_{ m } - \nabla \risk ( \hat{ h }_{ m-1 } ), \hat{ h }_{ m-1 } - \hstar }
    .\end{align*}
    Finally, summing over $ 1 \leq m \leq M $ leads to
    \begin{align}
        \label{three sums}
        \begin{split}
            \sum_{ n=1 }^{ M } \left[
                \risk ( \hat{ h }_{ m-1 } ) - \risk ( \hstar )
            \right]
            &\leq \sum_{ m=1 }^{ M } \frac{ 1 }{ 2 \alpha_{ m } } \left(
                \norm*{ \hat{ h }_{ m-1 } - \hstar }^2
                -
                \norm*{ \hat{ h }_{ m } - \hstar }^2
            \right) \\
            &\hspace{1cm} + \sum_{ m=1 }^{ M } \frac{ \alpha_{ m } }{ 2 } \norm*{ u_{ m } }^2 \\
            &\hspace{1cm} + \sum_{ m=1 }^{ M }
            \dotprod{ \nabla \risk ( \hat{ h }_{ m-1 } ) - u_{ m }, \hat{ h }_{ m-1 } - \hstar }.
        \end{split}
    \end{align}
    The next step is to take the average of both sides with respect to $ \bz_{ 1:M } $, taking advantage of the independence between $ \bz_{ 1:M } $ and $ \dataset $, the data used to compute $ \hat{ \Phi }, \hat{ r } $ and $ \hat{ \meanop } $.
    Each summation in the RHS is then bounded separately.

    The first summation admits a deterministic bound.
    By assumption, the diameter $ D $ of $ \searchset $ is finite.
    Hence
    \begin{align}
        \begin{split}
            \sum_{ m=1 }^{ M } \frac{ 1 }{ 2 \alpha_{ m } } \left(
                \norm*{ \hat{ h }_{ m-1 } - \hstar }^2
                -
                \norm*{ \hat{ h }_{ m } - \hstar }^2
            \right)
            &= \sum_{ m=2 }^{ M } \left(
                \frac{ 1 }{ 2 \alpha_{ m } } - \frac{ 1 }{ 2 \alpha_{ m-1 } } 
            \right) \norm*{ \hat{ h }_{ m-1 } - \hstar }^2 \\
            &+ \frac{ 1 }{ 2 \alpha_{ 1 } } \norm*{ \hat{ h }_{ 0 } - \hstar }^2 - \frac{ 1 }{ 2 \alpha_{ M } } \norm*{ \hat{ h }_{ M } - \hstar }^2
        \end{split} \nonumber \\
        &\leq \label{bound first sum}
        \sum_{ m=2 }^{ M } \left(
            \frac{ 1 }{ 2 \alpha_{ m } } - \frac{ 1 }{ 2 \alpha_{ m-1 } } 
        \right) D^2 + \frac{ 1 }{ 2 \alpha_{ 1 } } D^2  = \frac{ D^2 }{ 2 \alpha_{ M } } 
    .\end{align}
    The second summation can be bounded with the aid of Lemma \ref{lem: bound u_m}:
    \begin{equation}
        \mean_{ \bz_{ 1:M } } \left[
            \sum_{ m=1 }^{ M } \frac{ \alpha_{ m } }{ 2 } \norm*{ u_{ m } }^2
        \right]
        = \frac{ 1 }{ 2 } \mean_{ \bz_{ 1:M } } \left[ \norm*{ u_{ m } }^2 \right] \sum_{ m=1 }^{ M } \alpha_{ m }
        \leq \frac{ 1 }{ 2 } \rho ( \hat{ \Phi }, \hat{ r }, \hat{ \meanop } )
        \sum_{ m=1 }^{ M } \alpha_{ m }
        \label{bound second sum}
    .\end{equation}
    Finally, the third summation can be bounded using Lemma \ref{lem: bound u_m - grad}.
    Let $ \mean_{ \bz_{ -m } } $ denote the expectation with respect to $ \bz_{ 1 }, \dots, \bz_{ m-1 }, \bz_{ m+1 }, \dots, \bz_{ M } $ and notice that
    \begin{align*}
        \mean_{ \bz_{ 1:M } } \left[
            \dotprod{ \nabla \risk ( \hat{ h }_{ m-1 } ) - u_{ m }, \hat{ h }_{ m-1 } - \hstar }
        \right]
        &= \mean_{ \bz_{ -m } } \left[
            \mean_{ \bz_{ m } } \left[
                \dotprod{ \nabla \risk ( \hat{ h }_{ m-1 } ) - u_{ m }, \hat{ h }_{ m-1 } - \hstar }
            \right]
        \right] \\
        &= \mean_{ \bz_{ -m } } \left[
                \dotprod{
                    \mean_{ \bz_{ m } } \left[
                        \nabla \risk ( \hat{ h }_{ m-1 } ) - u_{ m }
                    \right],
                    \hat{ h }_{ m-1 } - \hstar
                }
            \right] \\
        &\leq \mean_{ \bz_{ -m } } \left[
                \norm*{
                    \mean_{ \bz_{ m } } \left[
                        \nabla \risk ( \hat{ h }_{ m-1 } ) - u_{ m }
                    \right]
                }
                \norm*{
                    \hat{ h }_{ m-1 } - \hstar
                }
            \right] \\
        &\leq D \mean_{ \bz_{ -m } } \left[
                \norm*{
                    \mean_{ \bz_{ m } } \left[
                        \nabla \risk ( \hat{ h }_{ m-1 } ) - u_{ m }
                    \right]
                }
            \right]
    .\end{align*}
    Then, applying Lemma \ref{lem: bound u_m - grad} and setting $ \tau \defeq D \kappa $ we get
    \begin{align}
        \begin{split}
            &\mean_{ \bz_{ 1:M } } \left[
                \dotprod{ \nabla \risk ( \hat{ h }_{ m-1 } ) - u_{ m }, \hat{ h }_{ m-1 } - \hstar }
            \right] \\
            &\hspace{2cm}
            \leq \tau ( \hat{ \Phi }) \left(
                \norm*{ \Phi - \hat{ \Phi } }_{ L^{ 2 } ( \prob_{ X } \otimes \prob_{ Z } ) }^2 + \norm*{ r - \hat{ r } }_{ L^2 ( Z ) }^2 + \norm*{ \meanop - \hat{ \meanop } }_{ \op }^2
            \right)^{ \frac{ 1 }{ 2 } }.
        \end{split}
        \label{bound third sum}
    \end{align}
    All that is left to do is to apply equations (\ref{three sums}), (\ref{bound first sum}), (\ref{bound second sum}) and (\ref{bound third sum}) along with the inequality which defines convexity.
    Let $ \hat{ h } \defeq \frac{ 1 }{ M } \sum_{ m=1 }^{ M } \hat{ h }_{ m-1 } $ and $ \xi \defeq \rho / 2 $.
    Then:
    \begin{align*}
        &\mean_{ \bz_{ 1:M } } \left[
            \risk ( \hat{ h } ) - \risk ( \hstar )
        \right] \\
        &\hspace{1cm}
        \leq \frac{ 1 }{ M } \sum_{ m=1 }^{ M } \mean_{ \bz_{ 1:M } } \left[
            \risk ( \hat{ h }_{ m } ) - \risk ( \hstar )
        \right] \\
        &\hspace{1cm}
        \leq
        \frac{ D^2 }{ 2 M \alpha_{ M } }
        + \xi ( \hat{ \Phi }, \hat{ r }, \hat{ \meanop }) \frac{ 1 }{ M } \sum_{ m=1 }^{ M } \alpha_{ m } \\
        &\hspace{2cm}+ \tau ( \hat{ \Phi } ) \left(
            \norm*{ \Phi - \hat{ \Phi } }_{ L^{ 2 } ( \prob_{ X } \otimes \prob_{ Z } ) }^2 + \norm*{ r - \hat{ r } }_{ L^2 ( Z ) }^2 + \norm*{ \meanop - \hat{ \meanop } }_{ \op }^2
        \right)^{ \frac{ 1 }{ 2 } }
    .\qedhere\end{align*} 
\end{proof}

\section{Comparison with other methods}
\label{sec: comparison}

In this section, we provide some theoretical comparisons with other notable methods for NPIV estimation.

\subsection{KIV}

\subsubsection{Risk bound}

In our formulation, we substitute the task for estimating $\meanop^{*}$ by that of estimating the ratio of densities $\Phi$.
In this way, we bypass the need to evaluate an operator estimator $\hat{\meanop^{*}}$ in every iteration, replacing this by evaluating $\hat{\Phi}$ once and multiplying it by the derivative of the loss function (cf. \cref{algo: sagdiv}).
Aside from the computational advantage this poses, it is interesting to investigate the differences which arise in theoretical bounds when comparing with a method which ends up using an estimator of $\meanop^{*}$.

In KIV, the authors propose a first stage procedure which estimates $E$ ($\meanop$, in our notation) and, in the second stage, they use the adjoint of this operator through the kernel mean embedding $\mu(z) = E^{*} \phi(z)$ ($\phi$ is the feature map, see the notation in \cite{singh2019}).
To estimate $E^{*}$, they use the adjoint of their estimator $E^n_\lambda$ of $E$, where $n$ is the sample size and $\lambda$ is a regularization parameter.

In our \cref{thm: main theorem}, provided the learning rate has been appropriately chosen, the excess risk is upper bounded by $\norm*{\Phi - \hat{\Phi}} + \norm*{r - \hat{r}} + \norm*{\meanop - \hat{\meanop}}$.
In fact, $\norm*{r - \hat{r}}$ is just the risk 
for a simple regression problem, and so we will focus our analysis on the other two terms.
In KIV, the excess risk is bounded using \cite[Theorem~2]{gretton2016}, according to which it is smaller than the sum of five terms.
We reproduce this bound here for a more detailed analysis:
\begin{theorem}[Proposition~32 of \cite{singh2019}]
    The excess risk of KIV's stage 2 estimator $\hat{H}^m_\xi$ can be bounded by five terms:
    \begin{equation}\label{eq: KIV}
        \kivrisk(\hat{H}^m_\xi) - \kivrisk(H_\rho)
        \leq 5 [S_{-1} + S_0 + \mathcal{A}(\xi) + S_1 + S_2]
    ,\end{equation}
    where
    \begin{align*}
        S_{-1} &=\norm*{\sqrt{T} \circ (\hat{\bT} + \xi)^{-1} (\hat{\bg} - \bg)}^2_{\Homega}, \\
        S_{0} &=\norm*{\sqrt{T} \circ (\hat{\bT} + \xi)^{-1} \circ (\bT - \hat{\bT}) H^m_\xi}^2_{\Homega}, \\
        S_{1} &=\norm*{\sqrt{T} \circ (\bT + \xi)^{-1} (\bg - \bT H_\rho) }^2_{\Homega}, \\
        S_{2} &=\norm*{\sqrt{T} \circ (\bT + \xi)^{-1} \circ (T - \bT) (H_\xi - H_\rho)}^2_{\Homega}, \\
        \mathcal{A}(\xi) &= \norm*{\sqrt{T}(H_\xi - H_\rho)}^2_{\Homega}.
    \end{align*}
\end{theorem}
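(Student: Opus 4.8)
The plan is to recognize this statement as a specialization of the regularized--inverse error decomposition of \cite[Theorem~2]{gretton2016} to the two--stage structure of KIV, and to reproduce it as a telescoping argument. First I would use that the stage--2 risk $\kivrisk$ is a quadratic least--squares functional whose minimizer over the relevant space is $H_\rho$; consequently the excess risk admits the bias characterization
\begin{equation*}
\kivrisk(\hat{H}^m_\xi) - \kivrisk(H_\rho) = \norm*{\sqrt{T}(\hat{H}^m_\xi - H_\rho)}^2_{\Homega}.
\end{equation*}
This is precisely what forces the factor $\sqrt{T}$ to appear in every term, and it reduces the task to controlling the $\Homega$--norm of $\sqrt{T}(\hat{H}^m_\xi - H_\rho)$.

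Next I would decompose the error $\hat{H}^m_\xi - H_\rho$ by inserting intermediate regularized solutions. Writing the empirical estimator as $\hat{H}^m_\xi = (\hat{\bT}+\xi)^{-1}\hat{\bg}$, the population regularized solution as $H_\xi = (\bT+\xi)^{-1}\bg$, and using the defining relation for $H_\rho$, I would telescope through the chain $\hat{H}^m_\xi \to H^m_\xi \to H_\xi \to H_\rho$. The essential algebraic tool is the resolvent identity
\begin{equation*}
(\hat{\bT}+\xi)^{-1} - (\bT+\xi)^{-1} = (\hat{\bT}+\xi)^{-1}(\bT - \hat{\bT})(\bT+\xi)^{-1},
\end{equation*}
which, applied at the appropriate junctions, produces exactly the operator--difference factors $(\bT - \hat{\bT})$ seen in $S_0$ and $(T - \bT)$ seen in $S_2$. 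The target discrepancy $\hat{\bg} - \bg$ of $S_{-1}$ and the population residual $\bg - \bT H_\rho$ of $S_1$ come directly from the definitions, while the pure regularization bias $H_\xi - H_\rho$ yields $\mathcal{A}(\xi)$.

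Having expressed $\sqrt{T}(\hat{H}^m_\xi - H_\rho)$ as a sum of five vectors in $\Homega$, I would finish with the triangle inequality followed by the elementary bound $\left(\sum_{i=1}^5 a_i\right)^2 \le 5\sum_{i=1}^5 a_i^2$, which simultaneously supplies the constant $5$ and the five squared--norm terms. The hard part is not any individual estimate but the bookkeeping: selecting the intermediate quantities (especially the mixed estimator $H^m_\xi$, which carries the empirical operator against the population target) so that the telescoping closes into exactly these five terms, and verifying that every cross--contribution spawned by the resolvent identity folds into one of them rather than generating a spurious sixth term. Since this is a restatement, the cleanest route is to invoke \cite[Theorem~2]{gretton2016} and \cite{singh2019} directly, checking only that KIV's notation $(\bT, T, \bg, H_\xi, \dots)$ maps onto their decomposition.
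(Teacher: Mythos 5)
The paper offers no proof of this statement --- it is imported verbatim as Proposition~32 of \cite{singh2019}, which in turn rests on \cite[Theorem~2]{gretton2016} --- and your reconstruction matches the argument in that cited source: the least-squares identity $\kivrisk(\hat{H}^m_\xi)-\kivrisk(H_\rho)=\norm*{\sqrt{T}(\hat{H}^m_\xi-H_\rho)}^2_{\Homega}$, telescoping through the chain $\hat{H}^m_\xi \to H^m_\xi \to H_\xi \to H_\rho$ with the resolvent identity supplying the $(\bT-\hat{\bT})$ and $(T-\bT)$ factors and the normal equation $\bg - \bT H_\rho$ supplying $S_1$, followed by $\left(\sum_{i=1}^5 a_i\right)^2\le 5\sum_{i=1}^5 a_i^2$ for the constant. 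The one slip is labeling $(\bT+\xi)^{-1}\bg$ as the population regularized solution $H_\xi$; that object is the mixed estimator $H^m_\xi$ (empirical second stage, exact first stage), while $H_\xi=(T+\xi)^{-1}g$, but the chain you describe is otherwise exactly the one that closes into these five terms with no spurious sixth.
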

In our notation, $\kivrisk$ corresponds to $\risk$, $\hat{H}^m_\xi$ corresponds to $\hat{h}$ and $H_\rho$ corresponds to $\hstar$.
The space $\Homega$ is a RKHS which contains the solution $H_\rho$.
It is also important to recall the definition of the terms in bold, since they depend on $\mu^n_\lambda(z) = (E^n_\lambda)^*\phi(z)$:
\begin{align*}
    \bT &= \frac{1}{m} \sum_{i=1}^m T_{\mu(\tilde{z}_i)},
    \quad
    \bg = \frac{1}{m} \sum_{i=1}^m \Omega_{\mu(\tilde{z}_i)} \tilde{y}_i \\
    \hat{\bT} &= \frac{1}{m} \sum_{i=1}^m T_{\mu^n_\lambda(\tilde{z}_i)},
    \quad
    \hat{\bg} = \frac{1}{m} \sum_{i=1}^m \Omega_{\mu^n_\lambda(\tilde{z}_i)} \tilde{y}_i.
\end{align*}
Where $\tilde{z}_i$ and $\tilde{y}_i$ are second-stage samples.
We refer the reader to \cite{singh2019} for a detailed description of the other mathematical objects.
Here, we will argue that
\begin{equation}
    S_{-1} + S_0 \quad \text{is analogous to} \quad \norm*{\Phi - \hat{\Phi}} + \norm*{\meanop - \hat{\meanop}}.
\end{equation}
The other terms in \cref{eq: KIV} are controlled by optimally choosing how the regularizing parameter $\xi$ goes to zero. This relates to the first two terms of our bound in \cref{eq: main bound} that goes to zero as the SGD iteration increases, as long as we properly choose the learning rate.
In \cite[Proposition~35]{singh2019}, these two terms are bounded as follows:
\begin{align*}
    S_{-1} &\leq \norm*{\sqrt{T} \circ (\hat{\bT} + \xi)^{-1}}^2_{\mathcal{L}(\Homega)}
    \norm*{\hat{\bg} - \bg}^2_{\Homega}, \\ 
    S_{0} &\leq \norm*{\sqrt{T} \circ (\hat{\bT} + \xi)^{-1}}^2_{\mathcal{L}(\Homega)}
    \norm*{\hat{\bT} - \bT}^2_{\mathcal{L}(\Homega)}
    \norm*{H^m_\xi}^2_{\Homega}.
\end{align*}
Then, \cite[Proposition~37]{singh2019} refers to \cite[Supplements~7.1.1~and~7.1.2]{gretton2016} to bound $\norm*{\hat{\bg} - \bg}^2$ and $\norm*{\hat{\bT} - \bT}^2$.
In \cite{gretton2016}, one can see that these bounds amount to\footnote{In \cite{gretton2016} there is a $2h$ in the exponent instead of $2$, which is translated to $2\iota$ in KIV. However, this is only a matter of assuming Holder continuity of $\Omega$ with exponent $\iota$. If one assumes Lipschitz continuity, the exponent becomes $2$.}
\begin{align*}
    \norm*{\hat{\bg} - \bg}^2
    &\leq C \cdot \frac{1}{m} \sum_{i=1}^m \norm*{\mu^n_\lambda(\tilde{z}_i) - \mu(\tilde{z}_i)}^2_{\mathcal{H}_\mathcal{X}} \abs{\tilde{y}_i}^2 \\
    \norm*{\hat{\bT} - \bT}^2
    &\leq D \cdot \frac{1}{m} \sum_{i=1}^m \norm*{\mu^n_\lambda(\tilde{z}_i) - \mu(\tilde{z}_i)}^2_{\mathcal{H}_\mathcal{X}},
\end{align*}
where $C$ and $D$ are constants.
Then, KIV uses the bound on $
\norm*{\mu^n_\lambda(\tilde{z}_i) - \mu(\tilde{z}_i)}^2_{\mathcal{H}_\mathcal{X}}
$
provided in their Corollary 1, which is based on the inequality\footnote{$E_\rho = E$.}
\begin{equation*}
    \norm*{\mu^n_\lambda(\tilde{z}_i) - \mu(\tilde{z}_i)}^2_{\rkhsx}
    \leq \norm*{E^n_\lambda - E_\rho}_{\rkhsgamma}^2 \norm*{\phi(\tilde{z}_i)}_{\rkhsz}^2
\end{equation*}
Hence, with high probability and ignoring constants, we have\footnote{We use the symbol $\lesssim$ to mean that the LHS is smaller or equal than a constant times the RHS, where the constant does not depend on data.}
\begin{align*}
    \norm*{\hat{\bg} - \bg}
    &\lesssim \norm*{E^n_\lambda - E_\rho} \\
    \norm*{\hat{\bT} - \bT}
    &\lesssim \norm*{E^n_\lambda - E_\rho}.
\end{align*}
This would be an intermediate result in \cite[Proposition~37]{singh2019}. 

Therefore, the estimation error of $E^n_\lambda$ when compared to $E$ impacts the risk of KIV's estimator through two terms.
These are analogous to our two terms $\norm*{\Phi - \hat{\Phi}}$ and $\norm*{\meanop - \hat{\meanop}}$, since $\norm*{\Phi - \hat{\Phi}}_{L^2(\prob_X \otimes \prob_Z)}$ is the Hilbert-Schmidt norm of the difference between $\meanop^*$ and the integral operator induced by $\hat{\Phi}$ \cite[Theorem~2.42]{florens2007}.

\subsubsection{Binary outcomes}

In \cref{sec: binary}, we show that, under the binary outcomes setting, the correct formulation of the risk is
\begin{equation*}
    \risk(h) = \mean [ \bce (r(Z), F(\meanop[h](Z))) ].
\end{equation*}
In KIV's notation, this would become\footnote{Since $\bce$ is linear in the first argument, it does not matter if we use $Y$ or $r(Z)$ to define the risk.}
\begin{equation*}
    \kivrisk(H) = \mean [ \bce (Y, F( H \mu(Z) )) ]
\end{equation*}
Substituting the populational mean by the empirical one, taking into account that we must estimate $\mu$ and applying regularization, we get (cf. \cite[Section~4.2]{singh2019})
\begin{equation*}
    \hat{\kivrisk}^m_\xi (H) = \frac{1}{m} \sum_{i=1}^m \bce(\tilde{y}_i, F (H \mu^n_\lambda(\tilde{z}_i) ))
    + \xi \norm*{H}^2_{\Homega}
.\end{equation*}
Using the fact that $\Homega$ and $\rkhsx$ are isometrically isomorphic
\cite[Section~4.2]{singh2019}, we can formulate the risk in an equivalent way, now defined on $\rkhsx$:
\begin{equation*}
    \hat{\kivrisk}^m_\xi (h) = \frac{1}{m} \sum_{i=1}^m \bce(\tilde{y}_i, F ( (E^n_\lambda h )(\tilde{z}_i) )
    + \xi \norm*{h}^2_{\rkhsx}
.\end{equation*}
Then, the KIV estimator is defined as
\begin{equation*}
    \hat{h}^m_\xi = \argmin_{h \in \rkhsx} \hat{\kivrisk}^m_\xi (h)
.\end{equation*}
For the quadratic loss, KIV finds the solution in closed form, since this is a ridge regression problem.
However, for this new loss, no obvious closed form solution exists.
Applying the Representer Theorem, we find that the solution $h^m_\xi$ should satisfy
\begin{equation*}
    h^m_\xi = \sum_{i=1}^n \alpha_i \psi(x_i),
\end{equation*}
for some $\alpha \in \R^n$, where $\psi(x_i) = K(x_i, \cdot)$ is the reproducing kernel of $\rkhsx$.
Then, one is left with the following optimization problem:
\begin{equation*}
    \argmin_{\alpha \in \R^n}
    \frac{1}{m} \sum_{i=1}^m \bce(\tilde{y}_i, F ( \alpha' w(\tilde{z}_i) )
    + \xi \alpha' K_{XX} \alpha
\end{equation*}
Since $(E^n_\lambda h^m_\xi)(z) = \alpha'w(z)$ (see the definition of $w$ in \cite[Appendix~A.5.1]{singh2019}).
An approximate solution to this $n$-dimensional optimization problem must be found with numerical methods.
Keep in mind that this problem must be solved \emph{several times} in order to find a good value of the regularization parameter $\xi$ via cross-validation, which is crucial for the performance of regularized regression methods.
In contrast, our method does not require additional work after defining the modified risk.

\subsection{Dual IV}

In DualIV, the risk minimization problem is reformulated into a saddle point optimization problem, which we write in their notation:
\begin{equation*}
   \min_{f \in \mathcal{F}} \max_{u \in \mathcal{U}} \Psi (f, u), 
\end{equation*}
where
\begin{equation*}
    \Psi(f, u) = \mathbb{E}_{XYZ} [f(X)u(Y, Z)] - \mathbb{E}_{YZ}[\ell^{\star}_Y (u (Y, Z))].
\end{equation*}
Here, $\mathcal{F}$ is the space which contains the solution to the NPIV estimation problem, $\mathcal{U}$ is the space of the ``dual function'' and $\ell^{\star}_y$ is the Fenchel dual of the function $\ell_y = \ell(y, \cdot)$.
In the binary outcomes setting, the loss is given by $\ell(y, y') = \bce(y, F (y'))$.
If $F$ is the logistic function, i.e. $F(y') = \exp(y')/(1 + \exp(y'))$, one may verify that
\begin{equation*}
    \ell_y^\star(u) = 
    \begin{cases}
    -H (u + y), \text{ if } 0 \leq u + y \leq 1, \\
    +\infty, \text{ otherwise},
    \end{cases} 
\end{equation*}
where $H(p) = - [p \log p + (1-p)\log(1 - p)]$ is the entropy.
The empirical minimax objective then becomes
\begin{equation*}
    \Psi(f, u) = \frac{1}{n} \sum_{i=1}^n f(x_i)u(y_i, z_i) + \frac{1}{n} \sum_{i=1}^n H(y_i + u(y_i, z_i)),
\end{equation*}
with the constrain that $0 \leq y_i + u(y_i, z_i) \leq 1 $ for every $i$.
Again, in contrast with the case of quadratic loss, when one may choose $\mathcal{F}$ and $\mathcal{U}$ to be RKHS and obtain a closed form ridge-regularized solution, this formulation is not as tractable.

\subsection{GMM}

In GMM based methods for continuous outcomes, the main idea is that since $\mean[\varepsilon \mid Z] = 0$ and $Y = \hstar(X) + \varepsilon $, we have $\mean[Y - \hstar(X) \mid Z] = 0$ and, hence, $\mean[f(Z)(Y - \hstar(X))] = 0$ for every $f \in L^2(Z)$.
If we choose a large number of functions $f_1, \ldots, f_m$, we may search for $\hstar$ by finding a function which minimizes
\begin{equation}
        \sum_{j=1}^m \psi_n(h, f_j)^2
    \quad \text{where} \quad
    \psi_n(h, f_j) = \frac{1}{n} \sum_{i=1}^n f_j(Z_i)(Y_i - h (X_i))
.\end{equation}
This is possible because the error term $\varepsilon$ can be written as $Y - \hstar(X)$.
Under the structural equation $Y = \ind \{ \hstar(X) + \varepsilon > 0 \}$, such a characterization of the error term is not available, which already presents a difficulty.
Nonetheless, as we show in \cref{sec: binary}, under this new equation we have $ \mean [Y \mid Z] = F (\meanop[\hstar](Z))$.
Therefore, moment conditions could be obtained through $ \mean[ (Y - F(\meanop[\hstar](Z))) f(Z) ] = 0 $ for every $f \in L^2(Z)$.
However, this would create the need to introduce an estimator $\hat{\meanop}$ of $\meanop$.
This defeats one of the main advantages of turning to GMM, which is to avoid the first stage of two-stage methods, since errors in the first stage can have a large impact on the resulting estimator.

\section{Implementation and experiment details}
\label{sec: implementation details}

In this section, we give practical guidelines for implementing SAGD-IV, as well further details regarding the baseline methods below:
\begin{itemize}
    \item KIV: Since the original implementation was only publicly available in Matlab, to facilitate reproducibility of our results we re-implemented this method in Python.
    This was done following the guidelines presented in the paper.
    \item DualIV: Similarly to KIV, we rewrote the method in Python using the closed form solutions and the hyperparameter selection procedure present in the paper.
    \item DeepGMM: We used the publicly available Python implementation\footnote{\url{https://github.com/CausalML/DeepGMM.git}}.
    \item DeepIV: We used the implementation available as part of the EconML package \cite{econml}.
    \item 2SLS: We include the classic Two Stage Least Squares procedure to show the benefits of following a nonparametric approach.
\end{itemize}

All experiments were conducted on a Apple Sillicon M1 Pro CPU. No GPUs were used.

\begin{remark}[Experiments sample size]
    \label{remark: sample size}
        All four baseline methods we are considering need a single dataset $(X, Z, Y)$ to fit their estimator (hyperpameter validation included), whereas our method needs one dataset of $(X, Z, Y)$ triplets to compute $\hat{\Phi}, \hat{r}$ and $\hat{r}$, and one dataset of $Z$ samples to conduct the SAGD loop.
        To make fair comparisons, we trained each method using the same \textbf{total amount of random variable samples}.
        For example, DeepGMM might use $800$ triplets $(X, Z, Y)$ for training and $200$ triplets for validation, whereas SAGD-IV may use $600$ triplets for fitting the preliminary estimators and then $1200$ samples of $Z$ for the SAGD loop, both totaling $3000$ \textit{random variable samples} used to adjust the model.
\end{remark}

\subsection{SAGD-IV}

\subsubsection{On computing $\hat{\meanop}, \hat{\Phi}$ and $\hat{r}$}

\paragraph{Ratio of Densities.}
Density ratio estimation is a well studied problem in Machine Learning, with \cite{sugiyama2012} being a good resource for acquainting oneself with relevant algorithms.
The approaches we follow here consist on obtaining an estimator $\hat{\Phi}$ that minimizes the (empirical version of the) quantity
$
\frac{1}{2} \norm*{\hat{\Phi} - \Phi}_{L^2(\prob_X \otimes \prob_Z)}^2
$.
We consider two options for $\hat{\Phi}$: a Reproducing Kernel Hilbert Space (RKHS) estimator obtained through the Unconstrained Least Squares Importance Fitting (uLSIF) framework described in \cite{sugiyama2012} and also a neural network trained to minimize the same objective over its parameters.
By changing the kernel or the activation function used, we can make sure this estimator adheres to \cref{assumption estimators} \cref{assumption phi hat}.

\paragraph{Conditional expectation operator.}
 This is the most complex object we must estimate, since it encompasses all possible regressions of functions of $X$ over $Z$.
 Not many options exist for this task, with a popular one \cite{darolles2011, florens2007} being the use of Nadaraya-Watson kernels \cite{nadaraya64, watson64}.
 We, however, chose to employ the estimation method used as a first stage in KIV \cite{singh2019}, which transforms the problem into vector-valued RKHS ridge regression, due to its straightforward implementation.

\paragraph{Conditional mean of $Y$ given $Z$.}
This, in contrast, is the simplest unknown in our formulation, being the regression function of $Y$ over $Z$.
We again consider two options for this task:
the first is to employ the method previously discussed in order to estimate the conditional expectation operator of $Y$ given $Z$, and then apply it to the identity function.
The second option is to train a general purpose regressor, such as a neural network.

\subsubsection{Hyperparameters}

Since the final estimator in \cref{algo: sagdiv} is an average over the whole path of estimates produced by the SAGD loop, we can improve upon its quality by discarding the first $K$ estimates $\hat{h}_1, \ldots, \hat{h}_K$, where $K$ must be chosen independently from $M$.
In this way, it is as if we started our algorithm from $\hat{h}_{K+1}$, after a warm up period.

Regarding sample splitting, our theoretical results do not specify an optimal ratio between the $M$ SAGD iterations and the $N$ samples of $(X, Z, Y)$ used to compute $\hat{\Phi}, \hat{r}$ and $\hat{\meanop}$.
We empirically determined that $M = 2N$ is a good guideline to follow, and it is the ratio we employed in \cref{sec: experiments}.
Of course, in real data scenarios this ratio might be constrained due to limited data availability.

About hyperparameters:
\begin{itemize}
    \item The \cref{algo: sagdiv} learning rate was set to be $\alpha_m = 1/\sqrt{M}$ for $1 \leq m \leq M$;
    \item The warm up time $K$ was set to 100;
    \item The set $\searchset$ was chosen as in \cref{eq: H is L infinity ball}, with $A$ set to $10$.
    During our experiments, we found that this parameter, aside from needing to be large enough for us to have $\hstar \in \searchset$, did not have any noticeable influence on the resulting estimator.
    \item For \textbf{Kernel SAGD-IV}:
    \begin{itemize}
        \item We used Gaussian kernels, with lenghtscale parameter determined through the median heuristic;
        \item The regularization parameters for the three kernel ridge regressions were chosen through cross validation.
        Since these parameters are essential for actual learning to take place, we selected the values to be tested through an iterative procedure, described in \cref{algo: reg parameters}.
    \end{itemize}
    \item For \textbf{Deep SAGD-IV}:
    \begin{itemize}
        \item For both networks in the roles of $\hat{\Phi}$ and $\hat{r}$, we used two hidden dense layers of sizes $64$ and $32$.
        In the continuous response setting the activation function used was a ReLU, while for the binary response one we used the sigmoid activation.
        For $\hat{\Phi}$, we used dropout with rate $0.01$.
        
        \item Both networks were trained with a batch size of 512, using the Adam \cite{adam} optimizer with learning rate of $0.01$, accross $1.5 \cdot 10^5 / N$ epochs.
        We adopted $L^2$ regularization in the two estimators, with regularization parameters of $3 \cdot 10^{-3}$ for $\hat{r}$ and $5 \cdot 10^{-3}$ for $\hat{\Phi}$.
        We also implemented early stopping in the two training procedures.
    \end{itemize}
    \item We remind the reader that both SAGD-IV variants use kernel methods to compute $\hat{\meanop}$.
\end{itemize}

\subsection{DeepGMM}
The modifications we made with respect to the original paper were the following:
\begin{itemize}
    \item Instead of using $50\%$ of the samples for training and $50\%$ for validation, we adopted a train / validation split of 80 / 20.
    \item We changed the batch size from $1024$ to $256$.
    This was done because we supplied the algorithm with $1000$ training samples instead of the $2000$ used in \cite{deepgmm2019}, and hence a batch size of $1024$ would not produce stochastic gradients.
\end{itemize}

\subsection{DeepIV}

We followed all the guidelines presented in the original paper for low dimensional settings.

\subsection{KIV}

We employed a $50/50$ split between $(X, Z, Y)$ and $(\tilde{X}, \tilde{Z}, \tilde{Y})$ observations (see \cite{singh2019}, Appendix A.5.2).
Regularization parameter candidates were also chosen using the selection method in \cref{algo: reg parameters}.

\begin{algorithm}[H]
    \caption{Search for regularization parameter}
    \label{algo: reg parameters}
    \begin{algorithmic}
        \STATE {\bfseries Input:}
        Loss function $L : \R \to \R$.
        List of initial values \textbf{parameters} $ = [ \lambda_1, \ldots, \lambda_n ]$.
        Initial offset $\varepsilon$.
        Number of iterations $T$.
        \STATE {\bfseries Output:} $ \lambda^\star $ 
        \FOR{$ 1 \leq t \leq T $}
        \STATE $\lambda^\star \gets \argmin \{ L(\lambda) \ \text{for} \ \lambda \ \text{in} \ \text{\textbf{parameters}} \}$
        \STATE \textbf{parameters} $\gets [ \lambda^\star + k\varepsilon \ \text{for} \ k \in \{ -5, -4, \ldots, 5 \} ]$
        \STATE $\varepsilon \gets \varepsilon / 10$
        \ENDFOR
    \end{algorithmic}
\end{algorithm}

\subsection{DualIV}

We followed the hyperparameter selection guidelines given in Section 4 of \cite{dualiv2020}.
This means we used a $50/50$ split for train/validation samples and chose the regularization parameters from a grid search, minimizing the squared norm of the dual function $u$.

\section{Small data regime}

This section presents a version of the experiments in Section \ref{sec: continuous response} with half the sample size, providing a comparison between SAGD-IV and other methods within a smaller data availability regime.
Figure \ref{fig: mse half data} contains the corresponding log-MSE results.

Contrasting Figures \ref{fig: mse continuous response and plots} and \ref{fig: mse half data}, we can see that the relative performances for each method mostly stay the same.
Some methods, such as Deep-IV in the \textbf{abs} scenario, exhibited an error distribution with larger variance, but this is to be expected from the reduction in sample size, which makes any given dataset realization a worse depiction of the true data distribution.
These results support the conclusion that our method's performance is competitive with other state-of-the-art approaches in more data availability conditions.
\begin{figure}[htb]
    \centering
    \includegraphics[width=0.6\textwidth]{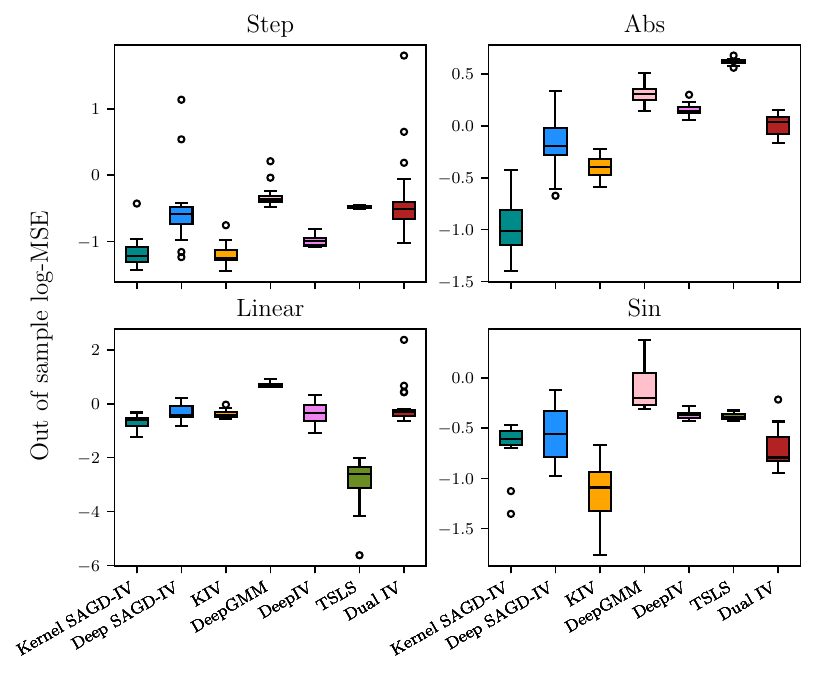}
    \caption{Log-MSE results for the experiments in Section \ref{sec: continuous response} with half the sample size.}
    \label{fig: mse half data}
\end{figure}

\section{Alternative risk definition}
\label{sec: alternative risk}

Computing the risk using $r(Z)$ or $Y$ are equivalent under the quadratic loss, however, one might think that considering $Y$ instead of $r(Z)$ in our formulation would lead to an algorithm that does not need to estimate $r(Z)$. Below we show that this is not the case, since the gradients of the risk measure under both formulations are the same.
This implies that we would still need to estimate $r$ for the algorithm.

If instead of \cref{eq: risk definition}, we consider the risk defined as
\begin{equation*}
    \widetilde{\risk} (h) = \mean [\ell (Y, \meanop [h] (Z))]
,\end{equation*}
then proceeding in exactly the same way as in the proof of \cref{prop: gradient expression}, we would arrive at
\begin{equation*}
    D \widetilde{\risk} [h] (f) = \mean [ \partial_2 \ell (Y, \meanop [h] (Z)) \cdot \meanop[f](Z) ]
\end{equation*}
As it is, this expression is not an inner product in $L^2(Z)$ since $Y$ is not measurable with respect to $Z$.
However, if $\partial_2 \ell$ is linear with respect to its first argument, which is true for the quadratic and binary cross entropy losses, we have
\begin{align*}
    D \widetilde{\risk} [h] (f)
    &= \mean [ \partial_2 \ell (Y, \meanop [h] (Z)) \cdot \meanop[f](Z) ] \\
    &= \mean \left[ \mean[ \partial_2 \ell (Y, \meanop [h] (Z)) \cdot \meanop[f](Z) \mid Z ] \right] \\
    &= \mean \left[\partial_2 \ell ( \mean[Y\mid Z] , \meanop [h] (Z)) \cdot \meanop[f](Z) \right] \\
    &= \mean \left[\partial_2 \ell ( r(Z) , \meanop [h] (Z)) \cdot \meanop[f](Z) \right]
,\end{align*}
which is the same expression obtained before.
Hence, the gradient of $\risk$ still depends on $r(Z)$ and the resulting method still needs to estimate this quantity.

Nonetheless, one could still use the sample $y_i$ as an estimate of $r(z_i)$. However, this is not the best choice, because the term $\norm{\hat{r} - r}^2_{L^2(Z)}$ which appears in the RHS of \cref{eq: main bound} would become $\mean [ \var[Y \mid Z] ]$. This is constant with respect to the size of the auxiliary dataset $\dataset$ (cf. \cref{assumption estimators}) and could be improved by taking $\hat{r}$ to be a better regressor of $Y$ over $Z$.

\end{document}